\newtheorem{theorem}{Theorem}[section]
\newtheorem{proposition}[theorem]{Proposition}
\newtheorem{corollary}[theorem]{Corollary}
\newenvironment{proof}[1][Proof]{\begin{trivlist}
\item[\hskip \labelsep {\bfseries #1}]}{\end{trivlist}}
\newenvironment{definition}[1][Definition]{\begin{trivlist}
\item[\hskip \labelsep {\bfseries #1}]}{\end{trivlist}}
\newcommand{\defeq}{\mathrel{\mathop:}=}
\begin{document}

\title{Optimizing Auto-correlation for Fast Target Search in Large Search Space}
\author{Arif Mahmood, Ajmal Mian and Robyn Owens \\ \hspace{20mm}\\ \textit{This work has been submitted to the IEEE for possible publication. Copyright may be transferred without notice, after which this version may no longer be accessible}%\\ This paper will appear in IEEE Transactions on Image Processing in 2014. The paper is made available to CiSRA, Canon's Australian R\&D centre as part of the job application package. All material in this paper is confidential and can only be used for the job applicant evaluation.
\thanks{School of Computer Science and Software Engineering, The
University of Western Australia, Crawley, WA }} \maketitle

\begin{abstract}

In  remote sensing image-blurring is induced by many sources such as atmospheric scatter, optical aberration, spatial and temporal sensor integration. The natural blurring can be exploited to speed up target search by fast template matching. In this paper, we synthetically induce additional non-uniform blurring to further increase the speed of the matching process. To avoid loss of accuracy, the amount of synthetic blurring is varied spatially over the image according to the underlying content. We extend transitive algorithm for fast template matching by incorporating controlled image blur. To this end we propose an Efficient Group Size (EGS) algorithm which minimizes the number of similarity computations for a particular search image. A larger efficient group size guarantees less computations and more speedup. EGS algorithm is used as a component in our proposed Optimizing auto-correlation (OptA) algorithm.  In OptA a search image is iteratively non-uniformly blurred while ensuring no accuracy degradation at any image location. In each iteration efficient group size and overall computations are estimated by using the proposed EGS algorithm. The OptA algorithm stops when the number of computations cannot be further decreased without accuracy degradation. The proposed algorithm is compared with six existing state of the art exhaustive accuracy techniques using correlation coefficient as the similarity measure. Experiments on satellite and aerial image datasets demonstrate  the effectiveness of the proposed algorithm.

\end{abstract}

\begin{IEEEkeywords}
 Fast Template Matching, Fast Pattern Matching,  Large Search
 Space, Transitivity of Correlation, Auto-correlation
\end{IEEEkeywords}

\section{Introduction}

Fast template or pattern matching~\cite{GrsPatternMatch2014,GrsImageReg2005,GrsTempMatch1992} is a critical  step in many remote sensing applications such as object detection and recognition~\cite{GrsTargetDet2014,GrsNCCImageReg2010,GrsCraterRecog2007,GrsObjectDet2003,tempObj}, image registration and alignment~\cite{GrsImageReg2014,GrsImageReg2005,Liz1992}, glacier surface movement detection~\cite{GrsGlacierSurfVel2011,GrsGlacier2000}, road detection~\cite{GrsRoadDet2014}, seismic monitoring~\cite{GrsSeismicMonitoring2012}, shadow detection~\cite{GrsShadowDet2014}, cloud detection and tracking~\cite{GrsCloudDet2014}, sea ice tracking~\cite{GrsSeaIceMotTrack2014,GrsSeaIceDrift2014}, stereo image matching for space born imagery~\cite{GrsStereoMatch2007}, DEM generation~\cite{GrsDemGeneration2011}, image super resolution~\cite{GrsNCCImageReg2010}, and content based image retrieval
\cite{GrsImageRet2014,GrsImageRet2013}.  In template
matching, a smaller template or target image is matched at multiple
locations of a larger search image to find the best match location
that maximizes an appropriate similarity measure.

In most of the remote sensing applications, the search space is large which increases the computational complexity of the matching process.  Numerous techniques have been proposed in literature to make the matching
process fast. Based on the search accuracy, these approaches may be
broadly divided into approximate accuracy  and exhaustive accuracy
techniques. The first category obtains fast speedup at the cost of
some loss of accuracy and often incorporates one or more
approximations.  For example, the search space may be approximated
with a smaller search space, the target may be approximated with a
simple representation, or the similarity measure  approximated with a
simpler measure.  The exhaustive accuracy techniques obtain fast
speedup without losing accuracy. This category includes domain
transformation techniques using FFT and bound based computation elimination
algorithms in which unsuitable search locations are skipped from
computations. In this paper we argue on maintaining exhaustive accuracy in the proposed algorithm which skips unsuitable search locations based on bound comparisons. To make the process fast, the search image is processed in a controlled way to avoid any loss of accuracy. 

%Common approximate accuracy techniques include polynomial
%approximation~\cite{Schweitzer},
%coarse-to-fine search, two stage template
%matching.  In coarse-to-fine search, the template and
%the search images are repeatedly low-pass filtered and sub-sampled.
%This works well for larger template images. However, for smaller
%sizes, the template may lose uniqueness and match at random
%locations. In two-stage template matching, a part of the template is
%matched at all search locations. The full template is only matched
%at locations exhibiting good similarity in the first stage. Although
%these techniques provide computational efficiency, the probability
%of missing the global maximum remains non-zero.

%Exhaustive accuracy algorithms based on domain transformations often
%use FFT.  These algorithms have
%computational complexity of the order of $O((m+p)(n+q)\log
%((m+p)(n+q)))$, where $m\times n$ is the number of pixels in the
%template and $p \times q$ are the number of pixels in the reference
%image. Template image is often assumed to be the smaller in size:
%$mn < pq$. If $mn$ becomes very small as compared to $pq$, the
%complexity of domain transformation algorithms becomes larger than
%the spatial domain algorithms which is $O(mnpq)$. Also in these
%algorithms, the number of computations remains the same regardless
%of the dissimilarity between the template and the reference image.
%However in the bound based computation elimination algorithms,
%computations significantly reduce when the template and the
%reference images are highly uncorrelated.

Bound based computation elimination algorithms are exhaustive accuracy fast template matching techniques. Most of these
algorithms are based on the Sum of Absolute Differences (SAD) or
the Sum of Squared Differences (SSD) 
~\cite{ProjKernels,ouyang2012performance,tombari2009full}.  Elimination algorithms using similarity
measures invariant to the intensity and contrast variations such as Normalized Cross Correlation (NCC) or Correlation
Coefficient ($\rho$) (or Zero-mean NCC) are relatively few including~\cite{EBC,TipPce,TIPTEA}. Algorithms in this category are the main focus of this paper. In most of the remote sensing applications, the images to be matched are acquired at different time of the day, often with significant time lag. Therefore simple measures like SAD and SSD are more vulnerable to errors as compared to the Correlation Coefficient.

Correlation coefficient $(\rho)$ (or ZNCC) is more robust to the linear photo-metric variations between the two images to be matched. Correlation coefficient between template image $t$ and a location in search image $r_o$ is denoted as $\rho_{to}$ and defined as 
\begin{equation}\label{rho1}\rho_{to}=\sum_{x,y} {\frac{(t(x,y)-\mu_t)(r_o(x,y)-\mu_o)}{\sigma_t\sigma_o
}},\end{equation}
$${\sigma_t=\sqrt{\sum_{x,y}
(t(x,y)-\mu_t)^2},\sigma_o=\sqrt{\sum_{
x,y}(r_o(x,y)-\mu_o)^2}},$$ where $\mu_t$ and $\mu_o$ are the
means of $t$ and $r_o$ respectively. If $\{r_c, r_o\} \in \mathcal{R}^{m\times n}$ are  two shifted blocks in the same image, then $\rho_{co}$ in \eqref{rho1} will represent local auto-correlation with a shift $s$. In this paper we use local auto-correlation to speedup the matching process. Local auto-correlation is enhanced by inducing controlled non-uniform image blur in the search images. As a result, we obtain high matching speed with a robust match measure.
  
 %The correlation coefficient is invariant to affine photometric
%transformations: $\rho(T_a\{r_c\},r_c)=\rho(\tilde{r}_c,r_c)=1.$

%Currently only  few elimination algorithms use correlation
%coefficient as the match measure. The earlier work includes ZBPC and
%ZEBC\cite{ZBPC,EBC}. BPC bound is not very tight, making it less
%efficient. The ZEBC bound is tight however ZEBC suffers from high
%bound computation cost and becomes inefficient when the template
%size is small. AMWU~\cite{AMWU} becomes inefficient for the relatively %larger search spaces mainly due to the hashing scheme proposed to find the temporary winners. The recently proposed PCE algorithm~\cite{TipPce}
%overcomes some of these drawbacks and provides larger speedup.
%However PCE only provides partial elimination,  some computations
%have to be done at each search location before moving to the next.

%In this paper, we consider the transitive elimination algorithm which
%provides complete elimination. The performance of the transitive algorithm
%depends on three important factors, the group size, the initial
%correlation threshold and the amount of
%auto-correlation~\cite{TIPTEA}. Currently, the first two parameters
%are user defined. An inappropriate value of these parameters may
%render the technique inefficient and no guidelines were provided for
%an efficient selection of these parameters. Also the third important
%factor, the auto-correlation, intrinsically depends upon the type of
%the images to be matched, and therefore does not provide any control.

In the category of correlation coefficient based fast template matching algorithms, we consider  transitive algorithm (TEA)~\cite{TIPTEA} mainly because this algorithm gives the opportunity to incorporate non-uniform blur to obtain high speed template matching. Also TEA has been shown to be faster than the previous algorithms of the same category~\cite{TIPTEA}. However, TEA cannot be directly used for this purpose because the design parameters of this algorithm are currently user defined. Especially the group size parameter and the initial threshold. We develop an Efficient Group Size (EGS) algorithm which maximizes an estimation of the the eliminated computations over group size parameter. We then incorporate EGS algorithm within a novel algorithm for optimization of auto-correlation (OptA) which computes non-uniformly blurred search image.  A scheme for early detection of the high initial threshold is also proposed. All these algorithms combined with TEA make a fast template matching system which has the same accuracy as the original algorithm but obtains significant speedup. Experiments are performed with a  wide variety of the template images on real satellite and aerial image datasets. We observe significant speedup over existing techniques.

\begin{figure}[t] \centering
\includegraphics[width=8.5cm]{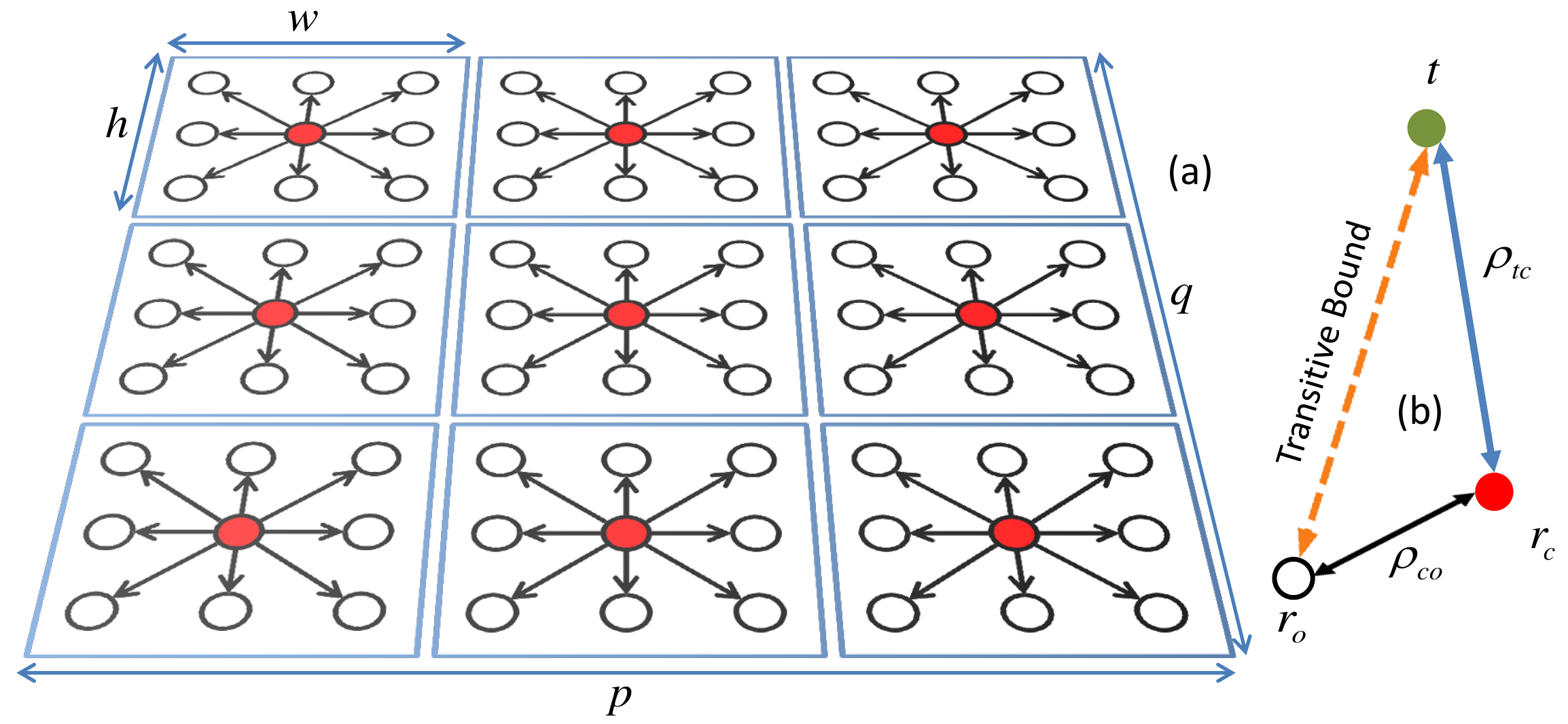}
 \caption{(a) Contiguous search locations in the reference image $I\in \mathcal{R}^{p\times q}$ form groups of size $h\times w$.
 (b) Two known correlations $\rho_{tc}=\rho(t,r_c)$ and $\rho_{co}=\rho(r_c,r_o)$ bound the unknown correlation $\rho_{to}=\rho(t,r_o)$.}
\label{fig:tea2}
\end{figure}

\begin{figure}[t] \centering
\includegraphics[width=8cm]{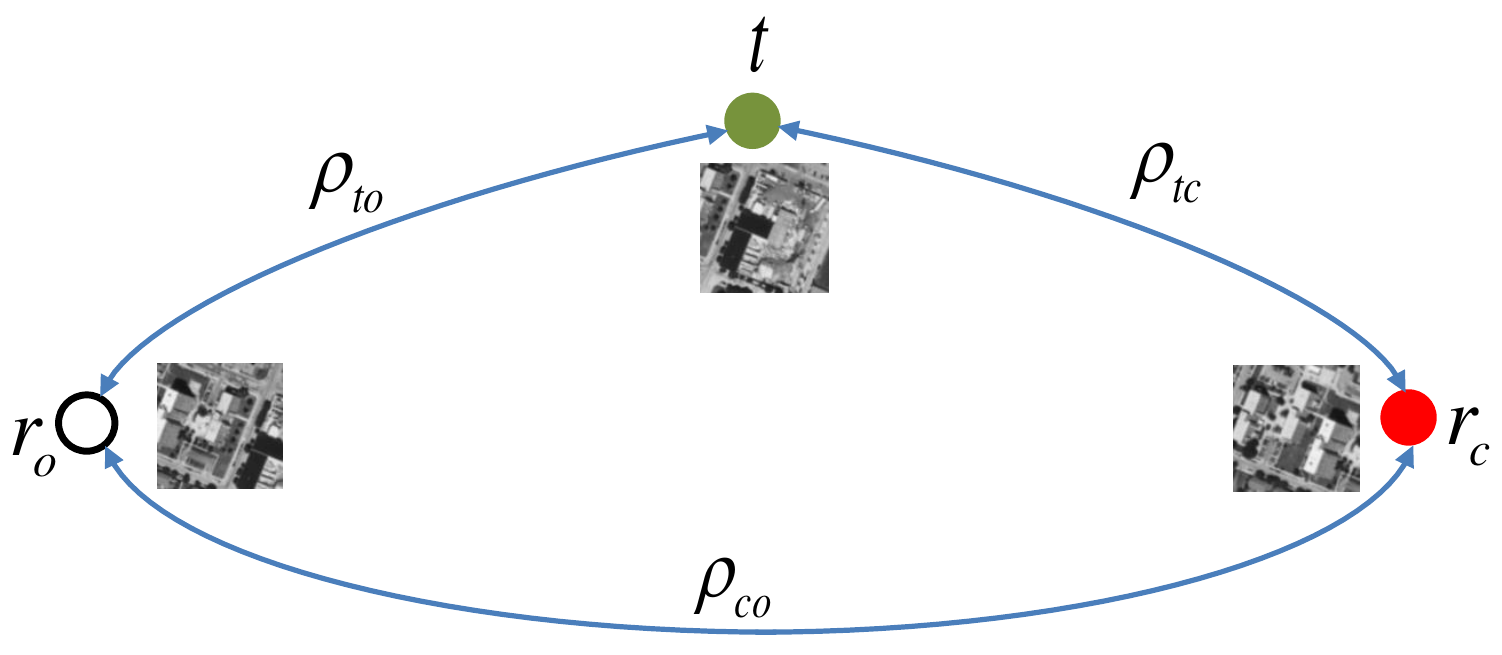}
\vspace{-3mm} \caption{Transitive bounds given by Theorem 1 are
defined on three image blocks \{$t$, $r_o$,
$r_c$\} $\in \mathcal{R}^{m\times n}$ and are quadratic in shape. If
$\rho_{co}=0.81$ and $\rho_{to}=0.61$ then third side is bounded by
$0.017\le \rho_{t,o} \le 0.955$.} \label{fig:threeBlocks}
\end{figure}

\section{A Review of Transitive Elimination Algorithm}
In transitive algorithm the search image is divided into small groups of contiguous search locations. Local auto-correlation of the group center is computed with all other locations within that group and transitive bounds are computed. At run time, the template image is only matched with the group center, while transitive bounds are used to skip all other unlikely locations (Fig. \ref{fig:tea2}). 
 
Transitive  bounds~\cite{TIPTEA}   are defined  for three images, template $t$, outer block $r_o$ and central block $r_c$. Pairwise correlations are given by
\{$\rho_{to}$, $\rho_{tc}$, $\rho_{co}$\} (Fig.
\ref{fig:threeBlocks}). Each correlation is
bounded by the other two correlations by the transitive inequality. Considering $\rho_{to}$ as bounded correlation and the other two as bounding correlations, the upper bound is given by
\begin{equation}\label{tranEqq}\rho_{to} \le  \rho_{tc}\rho_{co}+\sqrt{(1-\rho_{tc}^2)(1-\rho_{co}^2)},\end{equation} and the lower bound is given by \begin{equation} \rho_{to} \ge \rho_{tc} \rho_{co}
-\sqrt{(1-\rho_{tc}^2)(1-\rho_{co}^2)}.\end{equation}
Transitive Gap $\epsilon$ is the difference between the upper and the lower transitive bounds
\begin{equation}\label{eq:eps}\epsilon =2\sqrt{1-\rho_{tc}^2}\sqrt{1-\rho_{co}^2}\end{equation} We observe that $\rho_{to}$ is contained within the transitive gap. The bound tightness can be defined as: the smaller the transitive gap ($\epsilon$), the tighter the transitive bounds. 

%Fig. \ref{fig:tbplot} shows the variation of upper and lower bounds
%on $\rho_{to}$. In each plot $\rho_{co}$ is varied while $\rho_{tc}$
%is fixed. For $\rho_{tc}=0$, the bounds form a circle (Fig.
%\ref{fig:tbplot}.a) and as $\rho_{tc}$ increases, the bounds start
%assuming the shape of an ellipse and the difference between the
%upper and the lower bounds starts decreasing (Fig.
%\ref{fig:tbplot}(b, c, \& d)). Maximally tight bounds occur for
%$\rho_{tc}=1.00$, for which the upper and lower bounds become
%equal: $\epsilon=0$ (Fig. \ref{fig:tbplot}.e).

\begin{proposition} Transitive gap $\epsilon$ will be minimized if the magnitude of at least one of the two bounding correlations
\{$\rho_{tc}$, $\rho_{co}\}$ is maximized.\end{proposition}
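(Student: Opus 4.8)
The plan is to treat the transitive gap $\epsilon$ from \eqref{eq:eps} as a function of the two bounding correlations and exploit its multiplicative structure. First I would note that since $\rho_{tc}, \rho_{co} \in [-1,1]$, the gap $\epsilon = 2\sqrt{1-\rho_{tc}^2}\sqrt{1-\rho_{co}^2}$ depends on these correlations only through their magnitudes. Setting $u = |\rho_{tc}|$ and $v = |\rho_{co}|$, both lying in $[0,1]$, I would rewrite $\epsilon = 2\sqrt{(1-u^2)(1-v^2)}$ and observe that, because the square root is strictly increasing, minimizing $\epsilon$ is equivalent to minimizing the product $(1-u^2)(1-v^2)$.

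Next I would establish monotonicity in each variable separately. Each factor $1-u^2$ is a strictly decreasing function of $u$ on $[0,1]$, most easily seen from $\frac{d}{du}(1-u^2) = -2u \le 0$ with equality only at $u=0$, and likewise $1-v^2$ is strictly decreasing in $v$. Since both factors are nonnegative, the product is jointly nonincreasing and is driven downward whenever either magnitude is increased. Hence raising $|\rho_{tc}|$ or $|\rho_{co}|$ can only shrink the gap, which already delivers the qualitative claim that maximizing a bounding correlation tightens the bound.

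The key point, and the one deserving a little care, is the ``at least one'' clause, which rests on the product form rather than on both factors simultaneously. Because the two factors multiply, driving a single factor to zero annihilates the entire product irrespective of the other. Concretely, pushing either $u$ or $v$ to its maximal value $1$ forces the corresponding factor $1-u^2$ or $1-v^2$ to vanish, so $\epsilon = 0$, which is the smallest value attainable since $\epsilon$ is a product of nonnegative square roots. Thus maximizing the magnitude of at least one bounding correlation suffices to reach the global minimum of the transitive gap, which is precisely the assertion. I expect the only mild obstacle to be phrasing the conclusion so as to separate the global-minimum statement, where one magnitude equals $1$, from the monotone-improvement statement, where increasing either magnitude already helps; both follow from the same elementary analysis of $(1-u^2)(1-v^2)$.
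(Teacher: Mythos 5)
Your proof is correct, but it takes a genuinely different route from the paper's. The paper proves the proposition by calculus: it differentiates the quantity inside the square root of \eqref{eq:eps}, i.e.\ $\frac{\partial}{\partial \rho_{tc}}\bigl(\rho_{tc}^2\rho_{co}^2-(\rho_{tc}^2+\rho_{co}^2)+1\bigr)=0$, obtains the condition $\rho_{tc}(1-\rho_{co}^2)=0$, and, assuming $\rho_{tc}\ne 0$, concludes $\rho_{co}\rightarrow\pm 1$. You instead reduce the problem to minimizing the product $(1-u^2)(1-v^2)$ over $u=|\rho_{tc}|,\,v=|\rho_{co}|\in[0,1]$ and argue by monotonicity: each factor is decreasing in its own variable, the product is nonnegative, and sending either magnitude to $1$ annihilates the product, which is therefore the global minimum $\epsilon=0$. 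Your argument is the more careful of the two, because the minimum here is attained on the \emph{boundary} of the feasible square, where a first-order stationarity condition is neither necessary nor sufficient; indeed, the paper's interior critical point $\rho_{tc}=0$ (excluded only by fiat) is actually a maximizer of the gap in that variable, so the derivative argument relies on an implicit boundary analysis that your monotonicity argument makes explicit. What the paper's approach buys is brevity and a form that foreshadows the later quadratic manipulations (e.g.\ Propositions 3.1 and 4.3); what yours buys is rigor, the clean separation of the global-minimum claim ($\epsilon=0$ when one magnitude equals $1$) from the monotone-improvement claim (any increase in either magnitude tightens the bound), and a direct justification of the ``at least one'' clause via the multiplicative structure.
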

\begin{proof}
Taking the derivative of $\epsilon$ w.r.t any one of the two bounding
correlations and setting it equal to zero ${\partial
\epsilon}/{\partial \rho_{tc}}=0$. From \ref{eq:eps} we get
$$\frac{\partial}{\partial \rho_{tc}}(\rho_{tc}^2\rho_{co}^2
-(\rho_{tc}^2+\rho_{co}^2)+1)=0.$$ $\rho_{tc}(1-\rho_{co}^2)=0$.
Since $\rho_{tc} \ne 0$ therefore $\rho_{co}^2=1$ or $\rho_{co}
\rightarrow \pm 1. $
\end{proof}

%\begin{figure}[t] \centering
%\includegraphics[width=7cm]{./figures/tbplot5}\vspace{-2mm}
%\caption{Variation of the upper bounds (solid lines) and the lower
%bounds (dotted lines) on $\rho_{to}$. (a) $\rho_{co}=0$ (b)
%$\rho_{co}=0.4$ (c) $\rho_{co}=0.8$ (d)$\rho_{co}=1.00$}
%\label{fig:tbplot}
%\end{figure}

In the remote sensing images, consecutive search locations are often highly
correlated, therefore the nearby search locations can be grouped
such that intra-group auto-correlations remain high.  Fig.
\ref{fig:tea2} shows the search image $I$ divided into groups of $h\times w$ search locations. Each group has a central location $(r_c)$ and others are outer locations $(r_o)$. Auto-correlation of each outer location $r_o$ with
the group center $r_c$  is computed by using a very efficient  
Algorithm \ref{algo:LSA}. At run time,  the template image ($t$)
is only matched with $r_c$, while the remaining locations
$r_o$ can be skipped if the sufficient elimination condition given below is satisfied.

\begin{definition}[Sufficient Elimination Condition:]
\label{prop:sec} without loss of accuracy, a search location $r_o\in
I$ can be skipped if there exists another search location $r_b\in I$
such that \begin{equation}\label{eq:SEC}\rho_{tb}\ge
\rho_{tc}\rho_{co}+\sqrt{1-\rho_{tc}^2}\sqrt{1-\rho_{co}^2},\end{equation} where $\rho_{tb}$ is the correlation coefficient \eqref{rho1} between $t$ and $r_b\in \mathcal{R}^{m\times n}$.
\end{definition}
If the sufficient elimination condition is satisfied then it is guaranteed that 
 $ \rho_{tb} \geq \rho_{to}$, therefore $r_o$
cannot exhibit better similarity than $r_b$.

\section{Computing Computational Costs }
In this section we analyze different types of costs involved in the transitive algorithm. These costs include the auto-correlation cost, central correlation cost and the cost of computing correlation on the locations with failed elimination condition. Based on this analysis, we develop an efficient group size algorithm which takes as input the auto-correlation matrix, initial threshold and computes a group size that minimizes the sum of all costs. 

\begin{algorithm}[t]
\caption{Local Auto-correlation Computation Algorithm}
\begin{algorithmic}
 \REQUIRE $I \in \mathcal{R}^{p\times q}$,  $(m,n)$
Template  size, $(h,w)$ Group Size \ENSURE Auto-correlation Matrix
$R_{co}\in \mathcal{R}^{p\times q}$
 \FOR{$i\Leftarrow 1$ to $h$}
     \FOR {$j \Leftarrow 1$ to $w$}
          \FOR { $\forall(x,y)\in I$} 
              \STATE $P(x,y) \Leftarrow I(x,y)\ast I(x+i,y+j)$
          \ENDFOR
          \STATE $S\Leftarrow$ Running-Sum($P,m,n$) 
          \FOR {$\forall (x,y) \in$ Group Centers}
              \STATE $R_{co}(x+i,y+j)\Leftarrow S(x+m,y+n)$
              \STATE $x \Leftarrow x+h$
              \STATE $y \Leftarrow y+w$
          \ENDFOR

 \ENDFOR
 \ENDFOR
 \end{algorithmic}
\label{algo:LSA}
\end{algorithm}

\subsection{Local Auto-correlation Computational Cost ($c_a$)}
Auto-correlation is efficiently computed by algorithm
\ref{algo:LSA}. The reference image is multiplied with its shifted
versions and running sum approach is used to compute the sum of
products over each block requiring only four operations.  The
overall complexity of this algorithm is $O((hw-1)pq)$, where
$h\times w$ is the group size or the number of shifts applied to the
reference image. Note that this cost is significantly smaller than
the cost of a single template matching $O(mnpq)$, because $hw <<
mn$.

\subsection{Central Locations Correlation Cost ($c_c$)}
Central correlation cost $c_c$ is required to match the template
image $t$ with the group centers $r_c$, therefore $c_c$ is proportional to
the number of groups in the search image
\begin{equation}\label{eq:cc}c_c= c_{\rho}\frac{pq}{hw},\end{equation} where $c_{\rho}$ is the one time matching cost.
$c_c$ remains fixed for a particular group size $h\times w$ and can
 be reduced by increasing the group size.

\subsection{Retained Locations Correlation Cost ($c_r$)}
This is the cost of computing the correlation at the 
locations with failed sufficient elimination condition: $c_r=c_\rho n_r$, where $n_r$ are the number of retained locations. As the group
size increases the within group auto-correlation reduces due to
increased distance between $r_c$ and $r_o$. As a result, the transitive
bound becomes loose, causing the sufficient elimination condition to
fail more often and an increase in $c_r$. A relatively precise estimate of retained locations can be found by Proposition \ref{lem:autocor1} while a relaxed but easy to pre-compute estimate can be found by Proposition \ref{prop:simplebound}.

\begin{proposition} \label{lem:autocor1} For a fixed  threshold value $\rho_{tb}$, the sufficient elimination condition will be satisfied on all search locations with  auto-correlation $\rho_{co}$ satisfying the following inequality   $$\rho_{co} \geq \rho_{tb}\rho_{tc} +\sqrt{1+\rho_{tc}^2\rho_{tb}^2-(\rho_{tc}^2+\rho_{tb}^2)}.$$\end{proposition}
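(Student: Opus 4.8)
The plan is to treat the sufficient elimination condition \eqref{eq:SEC} as an inequality in the single unknown $\rho_{co}$, with $\rho_{tb}$ and $\rho_{tc}$ held fixed, and to invert it. First I would move the product term across so that the radical stands alone,
\begin{equation}\label{eq:iso}
\rho_{tb}-\rho_{tc}\rho_{co}\ \geq\ \sqrt{1-\rho_{tc}^2}\,\sqrt{1-\rho_{co}^2}.
\end{equation}
The right-hand side is nonnegative, so whenever \eqref{eq:SEC} holds we automatically have the sign precondition $\rho_{tb}-\rho_{tc}\rho_{co}\geq 0$; I record this now, because it is what ultimately selects the correct root.

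Next I would square \eqref{eq:iso}. The $\rho_{tc}^2\rho_{co}^2$ terms cancel between the two sides, and collecting the remainder as a quadratic in $\rho_{co}$ gives
\begin{equation}\label{eq:quad}
\rho_{co}^2-2\rho_{tc}\rho_{tb}\,\rho_{co}+\left(\rho_{tc}^2+\rho_{tb}^2-1\right)\ \geq\ 0.
\end{equation}
A short calculation shows that the discriminant of the associated equation factors neatly as $(1-\rho_{tc}^2)(1-\rho_{tb}^2)$, which is nonnegative for all admissible correlations, so the two real roots are
\begin{equation}\label{eq:roots}
\rho_{co}^{\pm}=\rho_{tc}\rho_{tb}\pm\sqrt{1+\rho_{tc}^2\rho_{tb}^2-(\rho_{tc}^2+\rho_{tb}^2)}.
\end{equation}
The upper root $\rho_{co}^{+}$ is exactly the bound asserted in the proposition.

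Since \eqref{eq:quad} is an upward parabola, it holds either for $\rho_{co}\geq\rho_{co}^{+}$ or for $\rho_{co}\leq\rho_{co}^{-}$, and the remaining task is to show that $\rho_{co}\geq\rho_{co}^{+}$ is the branch that genuinely implies \eqref{eq:SEC}. I expect this branch selection to be the only real obstacle, as everything up to \eqref{eq:roots} is routine. The resolution is the sign precondition from \eqref{eq:iso}: substituting $\rho_{co}^{+}$ into $\rho_{tb}-\rho_{tc}\rho_{co}$ and simplifying shows that it is nonnegative precisely in the operative regime of the algorithm, where the running threshold is at least the central correlation, $\rho_{tb}\geq\rho_{tc}$ (which always holds once the template has been matched against the group center $r_c$). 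In that regime $\rho_{co}\geq\rho_{co}^{+}$ keeps $\rho_{tb}-\rho_{tc}\rho_{co}\geq 0$, so the squaring is reversible and \eqref{eq:quad} can be taken back to \eqref{eq:iso}, recovering \eqref{eq:SEC}; the lower branch is discarded because there the precondition fails and the squaring is spurious. This confirms that any location with $\rho_{co}\geq\rho_{co}^{+}$ satisfies the sufficient elimination condition.
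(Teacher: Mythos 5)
Your algebra retraces the paper's own proof step for step: isolate the radical to get \eqref{opt1}, square to obtain the quadratic \eqref{opt2}, factor the discriminant as $(1-\rho_{tc}^2)(1-\rho_{tb}^2)$, and keep the upper root. Where you genuinely depart from the paper is in how the upper root is justified. The paper presents the squaring as mere ``simplifying'' and selects the branch heuristically: in ``practically useful ranges'' the lower root is mostly negative, and since natural-image auto-correlations at small lags are positive, ``in most of the cases'' only the upper root is the valid solution. You instead argue deductively: you track the sign precondition $\rho_{tb}-\rho_{tc}\rho_{co}\ge 0$ that makes the squaring reversible, note that $\rho_{tb}-\rho_{tc}\rho_{co}^{+}=\sqrt{1-\rho_{tc}^2}\,\bigl(\rho_{tb}\sqrt{1-\rho_{tc}^2}-\rho_{tc}\sqrt{1-\rho_{tb}^2}\bigr)$ is nonnegative exactly when $\rho_{tb}\ge\rho_{tc}$ (by monotonicity of $x\mapsto x/\sqrt{1-x^2}$), and observe that $\rho_{tb}\ge\rho_{tc}$ is an invariant the algorithm enforces, since the running threshold already incorporates the group-center correlation. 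This closes a real gap in the paper's proof, which never verifies that the squared inequality can be carried back to \eqref{eq:SEC}; the paper's heuristic buys brevity and an estimate adequate for cost prediction, while your version buys an unconditional guarantee under a condition that actually holds at run time. One small omission: for $\rho_{tc}>0$ the quantity $\rho_{tb}-\rho_{tc}\rho_{co}$ decreases in $\rho_{co}$, so your claim that it stays nonnegative on the whole upper branch implicitly uses $\rho_{co}\le 1$ (the minimum is at $\rho_{co}=1$, where it equals $\rho_{tb}-\rho_{tc}\ge 0$); say this explicitly.

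One assertion in your final paragraph is wrong, although it does not damage the proposition. You discard the lower branch ``because there the precondition fails and the squaring is spurious.'' In fact $\rho_{tb}-\rho_{tc}\rho_{co}^{-}=\sqrt{1-\rho_{tc}^2}\,\bigl(\rho_{tb}\sqrt{1-\rho_{tc}^2}+\rho_{tc}\sqrt{1-\rho_{tb}^2}\bigr)$, which is nonnegative whenever $\rho_{tb}\ge-\rho_{tc}$ --- the typical situation for a threshold like $\rho_{tb}=0.8$ --- and in that case the precondition holds for all $\rho_{co}\le\rho_{co}^{-}$ as well, so locations on the lower branch genuinely satisfy \eqref{eq:SEC} too. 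The paper says exactly this in the remark following the proposition: in some ranges the lower root also becomes positive, ``resulting in more elimination'' than the upper-root estimate. Since the proposition asserts only sufficiency of the upper-branch condition, the correct disposition of the lower branch is that it is not needed, not that it is spurious.
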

\begin{proof} Elimination will be obtained if  
\begin{equation}\label{opt1}\rho_{tb}-\rho_{co}\rho_{tc} \ge \sqrt{1-\rho_{co}^2}\sqrt{1-\rho_{tc}^2},\end{equation}
or simplifying,\begin{equation}\label{opt2} \rho_{co}^2
-2\rho_{tb}\rho_{tc} \rho_{co}+ (\rho_{tc}^2+\rho_{tb}^2-1)\ge
0.\end{equation} Being quadratic,
$\rho_{co}$  has two roots.  %which are plotted in Fig.~\ref{fig:Prop3_1}. 
In the practically useful ranges of $\rho_{tc}$ and $\rho_{tb}$, the upper root mostly remains positive while the lower root is mostly negative. Since auto-correlation of the natural images for small lags is positive ($\rho_{co} \ge 0$) therefore in most of the cases only  the upper root is the valid solution. Hence elimination will be obtained if $\rho_{co}$
is larger than the upper root:
\begin{equation}\label{opt3} \rho_{co} \ge
\rho_{tb}\rho_{tc} +
\sqrt{1+\rho_{tc}^2\rho_{tb}^2-(\rho_{tc}^2+\rho_{tb}^2)}.
\end{equation}
\end{proof}

%\begin{figure}[t] \centering
%\includegraphics[width=8.5cm]{./figures/Proposition3_1}
% \caption{Plot of the two roots of \eqref{opt2}. For each variable  practically significant range is shown: $0.65 \le \rho_{tb} \le 0.95$, $-0.40 \le \rho_{tc} \le 0.40$.  }
%\label{fig:Prop3_1}
%\end{figure}

Note that in narrow ranges, lower root may also become positive resulting in more elimination than the estimation based on \eqref{opt3}. However, these cases being less frequent may be ignored from the estimation without inducing significant error.
Proposition \ref{lem:autocor1} can be used to estimate the number of search locations with failed elimination condition if $\rho_{tc}$ is known which limits the beforehand estimation of the retained locations.  However, Proposition \ref{prop:simplebound} enables us to estimate the number of retained locations without knowing $\rho_{tc}$. We empirically observe that the estimation error in Proposition \ref{prop:simplebound} is small and  the result is used to formulate the efficient group size algorithm.

\begin{proposition}\label{prop:simplebound} A search location will be  eliminated if $$E[\rho_{co}] \ge \sqrt{1-\rho_{tb}^2},$$ where the expectation is computed over a  neighborhood around that location.
\end{proposition}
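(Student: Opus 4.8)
The plan is to derive the simplified criterion directly from Proposition~\ref{lem:autocor1} by removing its dependence on the template-dependent quantity $\rho_{tc}$, which is unavailable before run time. First I would rewrite the radical in the exact elimination threshold \eqref{opt3} using the factorization $1+\rho_{tc}^2\rho_{tb}^2-(\rho_{tc}^2+\rho_{tb}^2)=(1-\rho_{tc}^2)(1-\rho_{tb}^2)$, so that the per-location condition reads $\rho_{co}\ge \rho_{tb}\rho_{tc}+\sqrt{(1-\rho_{tc}^2)(1-\rho_{tb}^2)}$. This form is convenient because the quantity under the root now separates cleanly into a template--center factor and a threshold factor.

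The key step is to eliminate $\rho_{tc}$. Interpreting the three correlations as cosines of angles between the zero-mean normalized blocks, the threshold is exactly $\cos(\theta_{tc}-\theta_{tb})$, where $\rho_{tc}=\cos\theta_{tc}$ and $\rho_{tb}=\cos\theta_{tb}$. For a generic search location the template bears no systematic resemblance to the central block, so its expected correlation satisfies $E[\rho_{tc}]\to 0$, i.e. $\theta_{tc}\to\pi/2$. Substituting $\rho_{tc}=0$ collapses the threshold to $\cos(\pi/2-\theta_{tb})=\sin\theta_{tb}=\sqrt{1-\rho_{tb}^2}$, which no longer references the template. This orthogonality substitution is the crux that makes the bound precomputable.

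Finally, since $\rho_{co}$ fluctuates spatially while $\rho_{tc}$ and $\rho_{tb}$ are fixed within a group, I would replace the pointwise auto-correlation by its neighborhood expectation $E[\rho_{co}]$, yielding the stated criterion $E[\rho_{co}]\ge\sqrt{1-\rho_{tb}^2}$ as an aggregate predictor of whether locations in that neighborhood satisfy the sufficient elimination condition. This is precisely the template-independent quantity needed to estimate $n_r$, and hence $c_r$, that drives the efficient group size computation.

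I expect the main obstacle to be the \emph{status} of the result rather than any calculation: the substitution $\rho_{tc}=0$ is a genericity assumption, not a worst-case bound. Indeed the exact threshold \eqref{opt3} attains its maximum value $1$ at $\rho_{tc}=\rho_{tb}$, so no template-independent pointwise guarantee of elimination can exist, and using $E[\rho_{co}]$ in place of the pointwise value predicts typical rather than certain elimination. The honest closing argument is therefore to present Proposition~\ref{prop:simplebound} as an estimator obtained by replacing $\rho_{tc}$ with its expectation and $\rho_{co}$ with its neighborhood mean, and to lean on the empirical observation (noted in the surrounding text) that the resulting estimation error is small, which is all that the group size analysis requires.
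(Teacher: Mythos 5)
Your proposal is correct in substance and follows the same skeleton as the paper's proof: both start from the threshold \eqref{opt3} of Proposition~\ref{lem:autocor1}, both invoke genericity of the template ($E[\rho_{tc}]=0$) to remove the template-dependent terms, and both arrive at the precomputable criterion $E[\rho_{co}]\ge\sqrt{1-\rho_{tb}^2}$ understood as an estimator rather than a guarantee. The real difference is how the nonlinear term is handled. You substitute $\rho_{tc}=0$ directly into the threshold (your $\cos(\theta_{tc}-\theta_{tb})\to\sin\theta_{tb}$ step), collapsing the product term and the radical in one stroke; this is a plug-in step that silently commutes the expectation through the nonlinear map $\rho_{tc}\mapsto\sqrt{1-\rho_{tc}^2}$. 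The paper instead takes expectations by linearity, which leaves the term $E\big[\sqrt{1-\rho_{tc}^2}\big]$ exposed, and then spends most of its proof pinning that term to $1$: a Taylor expansion about $\rho_{tc}=0$ combined with the observation that $\rho_{tc}^2$ is a coefficient of determination, Beta-distributed with mean $1/(mn-1)$ for uncorrelated normally distributed blocks, so the term differs from $1$ by at most $1/(2(mn-1))$. What the paper's route buys is a quantitative bound on exactly the Jensen gap your substitution ignores --- the only point where the argument could actually fail; what your route buys is brevity and a more candid framing, and your closing observation that the exact threshold attains $1$ at $\rho_{tc}=\rho_{tb}$, so no template-independent pointwise guarantee can exist, is sharper than anything the paper states explicitly (it only concedes empirically that the estimation error is small). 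One caution if you flesh out your argument along the paper's lines: the paper's expansion \eqref{taylor} carries a sign error (the expansion of $\sqrt{1-x}$ begins $1-x/2$, not $1+x/2$), so its claimed lower bound $E\big[\sqrt{1-\rho_{tc}^2}\big]\ge 1$ is spurious --- the function is pointwise at most $1$, and by concavity and Jensen the correct statement is $E\big[\sqrt{1-\rho_{tc}^2}\big]\le 1$ with deficit $O(1/(mn))$, which still delivers the proposition to the stated accuracy.
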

\begin{proof}  Taking expectation of both sides of \eqref{opt3} and considering $\rho_{tb}$ as a fixed value, $$ E[\rho_{co}] \ge
\rho_{tb}E[\rho_{tc}] +
\sqrt{1-\rho_{tb}^2}E\big[\sqrt{1-\rho_{tc}^2}\big].$$ Since $t$ is uncorrelated with most of the search locations in $I$, therefore 
$E[\rho_{tc}]=0$. \begin{equation}\label{cut1} E[\rho_{co}] \ge
\sqrt{1-\rho_{tb}^2}E\big[\sqrt{1-\rho_{tc}^2}\big].\end{equation} Taylor series expansion of $\sqrt{1-\rho_{tc}^2}$ centered at 1 is \begin{equation} \label{taylor}
\sqrt{1-\rho_{tc}^2} \approxeq 1+\frac{\rho_{tc}^2}{2}-\frac{\rho_{tc}^4}{8}.
\end{equation} Since $-1 \le \rho_{tc} \le 1$, therefore $\frac{\rho_{tc}^2}{2}-\frac{\rho_{tc}^4}{8} \ge 0$.  \begin{equation}\label{gt}
E\big[\sqrt{1-\rho_{tc}^2}\big] \ge 1 .\end{equation}
Probability around $\rho_{tc}=0$ is very high and probability around $|\rho_{tc}|=1$ is very low. Therefore, most of the times ${\rho_{tc}^4}/{8}$ will have a negligibly small magnitude. 
$$E\big[\sqrt{1-\rho_{tc}^2}\big] \le 1+\frac{1}{2}E[\rho_{tc}^2] .$$
Note that $\rho_{tc}^2$ is the coefficient of determination~\cite{cramer1987}. If the two sets of numbers have normal distribution, and are uncorrelated the coefficient of determination will have a Beta distribution~\cite{carrodus1992,koerts1969}. The mean of this distribution for the univariate case is $1/(mn-1)$. Therefore $$E\big[\sqrt{1-\rho_{tc}^2}\big] \le 1+\frac{1}{2(mn-1)}.$$ For large $mn$, $\frac{1}{2(mn-1)} \approxeq 0$, therefore \begin{equation}\label{lt}
E\big[\sqrt{1-\rho_{tc}^2}\big] \le 1
\end{equation} From \eqref{gt} and \eqref{lt}, we conclude \begin{equation}\label{et}
E\big[\sqrt{1-\rho_{tc}^2}\big] = 1.
\end{equation}
Therefore  \eqref{cut1} gets simplified to
$$ E[\rho_{co}] \le \sqrt{1-\rho_{tb}^2}$$  
\end{proof}

Using Proposition \ref{prop:simplebound}, an estimate of the correlation cost ($c_r$) on the search locations ($n_r$) where sufficient
elimination condition may fail  is given by 
\begin{equation}\label{eq:co} c_r=c_{\rho} n_r=c_{\rho}\sum_{x=1}^p\sum_{y=1}^q{(\rho_{co}(x,y)\le \sqrt{1-\rho^2_{tb}})},\end{equation} where $(\rho_{co}(x,y)\le \sqrt{1-\rho^2_{tb}})$ will evaluate 1 if true and 0 if false.
%where \begin{equation}\label{eq:lt}
%\Gamma_{i,j}(\rho_{co},\rho_{tb})=
%\begin{cases} 0 \text{ if } \rho_{co} \ge \sqrt{1-\rho_{tb}^2}\\ 1 \text{ otherwise
%}.  \end{cases}  \end{equation} Therefore $C_{r}=c_{\rho}L_r$.

\begin{algorithm}[t]
\caption{EGS: Efficient Group Size}
\begin{algorithmic}

 \REQUIRE $I \in \mathcal{R}^{p\times q}$, $(h_0,w_0)$ \{Initial Group Size\},
          $(m,n)$ \{Template Size\}, $\rho_{th} $ \{Threshold\}, $\xi$ \{Cost decrement\}
\ENSURE $(h_{e},w_{e})$, $R_{co}$, $c_{t}$ \{Total estimated Cost\} \STATE
$c_t^{0}\Leftarrow mnpq, k\Leftarrow 1 $, $c_t^1\Leftarrow 0$
  \WHILE{$c_t^{k-1}-c_t^{k}>\xi$} 
    \STATE $c_c^k= {(pq)}/{(h_kw_k)}$
    \STATE $R_{co}^k\Leftarrow$Auto-correlation($I,m,n,h_k,w_k$)
    \FOR {$\forall (x,y)\in I$}
       \IF {$R_{co}^k(x,y)<\sqrt{1-\rho_{tb}^2}$} 
       \STATE $c_r^k \Leftarrow c_r^k+1$
       \ENDIF
     \ENDFOR
      \STATE $c_t^k \Leftarrow c_{\rho}(c_r^k+c_c^k)$
     \IF {$c_t^{k-1}-c_t^{k}> \xi$}
     \STATE $c_t^{k-1} \Leftarrow c_t^k,  h_{k-1}\Leftarrow h_k, w_{k-1}\Leftarrow w_k $
      \STATE $h_{k} \Leftarrow h_{k}+\Delta h $, $w_{k} \Leftarrow w_{k}+\Delta w $
     \ENDIF
     \STATE $k \Leftarrow k+1 $
  \ENDWHILE
 \STATE $c_t\Leftarrow c_t^{k-1}, R_{co}\Leftarrow R_{oc}^{k-1},\{h_e,w_e\}\Leftarrow\{h_{k-1},w_{k-1}\}$
 \end{algorithmic}
\label{algo:EGS}
\end{algorithm}

Total cost $c_t$ is given by the sum of the auto-correlation, central locations and the
retained locations cost: $c_t=c_c+c_r+\frac{c_a}{n_t}$, where $n_t$ is the number of template images to be matched with the same reference image. For large $n_t$, the auto-correlation cost factor $\frac{c_a}{n_t}$ will become very small and can be ignored. Therefore, total cost is given by
\begin{equation}c_t=c_{\rho}(\frac{pq}{h w} +n_r).\end{equation}

\subsection{Computing Efficient Group Size}
We define efficient group size ($w_e,h_e$) as
\begin{equation}\{w_e,h_e\} \defeq \label{objfunc}\min_{h,w} c_{\rho}\left (\frac{pq}{h w} +n_r\right )\end{equation}
\label{sec:costEGS} 
As the group size increases the total cost
given by  \eqref{objfunc} decreases until it hits the minimum
  and then starts increasing. As the gradient of the cost
function changes sign, efficient group size parameters are found. Algorithm \ref{algo:EGS} iteratively solves this optimization problem. In $k^{th}$ iteration  within group auto-correlation matrix $R_{co}^k$ is computed using Algorithm \ref{algo:LSA}. In each iteration,
the computational complexity of Algorithm \ref{algo:LSA} is
$O((h_k w_k-1)pq)$, where ($h_k,w_k$) is the group size in that iteration. Total computational complexity  is of the order of $O(\omega pq)$, where
$\omega=f((h_o+h_{f})(w_o+w_f)/4-1)$,  $f$ is the number of
iterations, ($h_o,w_o$) are the initial and ($h_f,w_f$) are final group sizes.   In order to ensure that the next iteration  only gets executed if the decrease in cost was significant from the last iteration, a parameter $\xi$ is introduced. In our experiments, we fix the value of $\xi$ to 0.5\% of the total cost in the last iteration.

\section{Preventing Loss of Signal Detection Due to Blur}
\label{lossSigDec}

Search image may be blurred to improve auto-correlation ($\rho_{co}$) resulting in larger efficient group size  and hence more speedup. However, uncontrolled blurring may suppress the peaks resulting in the loss of detection rendering the algorithm less accurate than the exhaustive accuracy. We propose to perform controlled non-uniform blurring in different image regions such that speedup is obtained without compromising exhaustive accuracy. 

In order to produce image blur, each pixel  in the image $I$ is
replaced by a weighted average of the pixels in a small neighborhood
around it called  \textit{blur support}. Let $\widehat{I}$ be a
blurred image computed as
\begin{equation}\label{eq:defblur}\widehat{I}(x,y)=\sum_{i=-d}^d\sum_{j=-d}^d{w(i,j)I(x+i,y+j)},\end{equation} where $1 \ge w(i,j)\ge 0$ is a
weight function or spatial averaging filter of size
$(2d+1)\times(2d+1)$ which is the blur support. There are many types of blur filters however the Gaussian averaging mask is most commonly used.
\begin{equation}\label{eq:Gauss}w(i,j)=\frac{1}{\alpha_g} \exp(\frac{i^2+j^2}{2\sigma_g^2}),\end{equation} where
$\alpha_g$ is a normalization term which ensures $\sum {w(i,j)}=1$ and $\sigma_g$ is Gaussian variance which controls the weight distribution and the filter size. In \eqref{eq:Gauss}, the amount of blur depends on the parameter $\sigma_g$ which controls the weight distribution and the filter size parameter $d$ is given by  $\sqrt{-2\sigma_g \ln (t_g)}$, where $t_g$ is the minimum non zero filter value.

The correlation of $t$ with blur image location $\widehat{r}_o \in \widehat{I}$,
represented  by $\widehat{\rho}_{to}$,  is a weighted
average of correlations of $t$ with original image blocks
$r_o\{i,j\}$ over the blur support. As the blur support increases,
correlation peaks with smaller base  suffer more averaging as
compared to the broader peaks. This may result in three types of
accuracy de-gradations including incorrect detection due to
suppression of the correct peak (signal) below a noise peak,
 lack of detection due to suppression of the signal below the initial threshold and  the loss
of localization due to widening of the signal peak. The first type of
degradation may happen if a competitive noise peak with bigger
support is present in the search space.

\subsection{Suppression of the Signal to Noise Ratio}

We argue that for the correct peaks (signal) with support larger than the blur support, the blur process cannot suppress the signal below the noise peaks.

\begin{proposition}\label{pr:order}Suppose there exists two search locations $\{r_a,r_b\}\in I$ such that
$\rho_{at}>\rho_{bt}$. After blur $\widehat{\rho}_{at}
> \widehat{\rho}_{bt}$ is guaranteed if
\begin{equation} \min_{-d \le (i,j) \le d}\{\rho_{at}\{i,j\}\}\ge \max_{-d \le (u,v) \le d}\{\rho_{bt}\{u,v\}\}.\end{equation}
\end{proposition}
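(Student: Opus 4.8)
The plan is to exploit the representation, stated just before this subsection, that the blurred correlation $\widehat{\rho}_{to}$ is a \emph{weighted average} of the unblurred correlations $\rho_{to}\{i,j\}$ over the blur support. Concretely, I would write $\widehat{\rho}_{at}=\sum_{i,j}\lambda_{ij}\,\rho_{at}\{i,j\}$ and $\widehat{\rho}_{bt}=\sum_{u,v}\lambda_{uv}\,\rho_{bt}\{u,v\}$, where the weights satisfy $\lambda_{ij}\ge 0$ and $\sum_{i,j}\lambda_{ij}=1$, with strictly positive central weight $\lambda_{00}>0$ and $\rho_{at}\{0,0\}=\rho_{at}$. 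With this in hand the entire argument reduces to an elementary convexity estimate, and I never have to manipulate the correlation formula \eqref{rho1} directly.

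First I would bound each blurred quantity by an extreme unblurred value. Since a convex combination never exceeds its largest entry nor falls below its smallest, $\widehat{\rho}_{at}\ge\min_{-d\le(i,j)\le d}\rho_{at}\{i,j\}$ and $\widehat{\rho}_{bt}\le\max_{-d\le(u,v)\le d}\rho_{bt}\{u,v\}$. The hypothesis of the proposition is exactly that these two extremes are ordered, so chaining the three relations gives $\widehat{\rho}_{at}\ge\min\{\rho_{at}\}\ge\max\{\rho_{bt}\}\ge\widehat{\rho}_{bt}$, which already delivers the weak inequality.

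The step needing care, and the one I expect to be the main obstacle, is upgrading this chain to the \emph{strict} inequality asserted in the statement, since every link above is only weak. I would argue by contradiction: if $\widehat{\rho}_{at}=\widehat{\rho}_{bt}$, then all four quantities coincide, forcing in particular $\widehat{\rho}_{at}=\min\{\rho_{at}\}$. A convex combination equals its minimum only when every term carrying positive weight attains that minimum; because $\lambda_{00}>0$, this forces $\rho_{at}\{0,0\}=\rho_{at}=\min\{\rho_{at}\}$. The symmetric argument gives $\rho_{bt}=\max\{\rho_{bt}\}$, so that $\rho_{at}=\min\{\rho_{at}\}=\max\{\rho_{bt}\}=\rho_{bt}$, contradicting the assumption $\rho_{at}>\rho_{bt}$. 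Hence $\widehat{\rho}_{at}>\widehat{\rho}_{bt}$.

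One modelling point I would state cleanly at the outset: the exact weights in the weighted-average representation are inherited from the blur mask $w(i,j)$ of \eqref{eq:defblur}, up to the per-block variance normalization intrinsic to $\rho$, so I would rely only on their two robust properties, namely nonnegativity with unit sum and strict positivity at the center, rather than on their precise values. Provided these hold, the convexity bounds are immediate and the proposition follows with no further computation.
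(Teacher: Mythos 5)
Your proof takes the same basic route as the paper's: both express the blurred correlation as a weighted average of the unblurred correlations over the blur support and then compare extremes. The difference lies in where the work is placed. You treat the convex-combination representation as a black box with nonnegative weights of unit sum, whereas the paper's proof spends essentially all of its effort establishing exactly that property: it derives the weights explicitly as $\nu_a(i,j)=w(i,j)\sigma_a\{i,j\}/\widehat{\sigma}_a$, with $\sigma_a\{i,j\}$ the standard deviation of the block at offset $(i,j)$ and $\widehat{\sigma}_a$ that of the blurred block, and then invokes the high local auto-correlation of remotely sensed images to approximate $\widehat{\sigma}_a\approx\sum_{i,j}w(i,j)\sigma_a\{i,j\}$, which is what makes the weights sum to one. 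Be aware that this is only an approximation: exactly, the triangle inequality gives $\widehat{\sigma}_a\le\sum_{i,j}w(i,j)\sigma_a\{i,j\}$, so the true weights sum to at least one, and then a step such as $\widehat{\rho}_{bt}\le\max_{u,v}\rho_{bt}\{u,v\}$ is no longer automatic; your argument is therefore correct only conditional on the same modelling assumption the paper makes, which you do flag, but which is really the substantive content of the proposition rather than a side remark. In your favor, your contradiction argument for strictness (positive central weight plus $\rho_{at}\{0,0\}=\rho_{at}$) repairs a point the paper passes over silently: its linearity step only yields the weak inequality, yet it asserts the strict conclusion $\widehat{\rho}_{at}>\widehat{\rho}_{bt}$ without further justification.
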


\begin{proof} Consider two weight matrices $\{\nu_a,\nu_b\}\in \mathcal{R}^{(2d+1) \times (2d+1)}$
such that  $1 > \{\nu_a(i,j),\nu_b(u,v)\} >0$,
$\sum_{i,j}\nu_a(i,j)=\sum_{u,v}\nu_b(u,v)=1$ and in general
$\nu_a(i,j) \ne \nu_b(u,v)$. By the linearity property
$$\sum_{i,j}{\nu_a(i,j)\rho_{at}\{i,j\}}\ge \sum_{u,v}{\nu_b(u,v)\rho_{bt}\{u,v\}}.$$
We now show that the weight matrices derived from the blurring filters satisfy the sum to 1 property.   If the variances of the blurred images are $\widehat{\sigma}_a$ and $\widehat{\sigma}_b$, the weight matrices will be given by
$$\nu_a(i,j)=\frac{w(i,j)\sigma_a\{i,j\}}{\widehat{\sigma}_a}, \nu_b(u,v)=\frac{w(u,v)\sigma_b\{u,v\}}{\widehat{\sigma}_b}.$$
Since the remotely sensed  images often  have high local auto-correlation, the blurred image variances simplify to
$$\widehat{\sigma}_a \approx \sum_{i,j}{w(i,j)\sigma_a
\{i,j\}} \text{ and } \widehat{\sigma}_b \approx \sum_{u,v}{w(u,v)\sigma_b
\{u,v\}}.$$ 
Substitution of these values yields following weight matrices
\begin{equation}\label{eq:eta1}\nu_a(i,j)=\frac{w(i,j)\sigma_a\{i,j\}}{\sum_{i',j'}{w(i',j')\sigma_a\{i',j'\}}},\end{equation}
\begin{equation}\label{eq:eta2}\nu_b(u,v)=\frac{ w(u,v){\sigma}_b\{u,v\}}
{\sum_{u',v'}{w(u',v')\sigma_b\{u',v'\}}},\end{equation}
where $(i',j')$ are the dummy variables of summation. Therefore $\sum_{i,j}\nu_a(i,j)=\sum_{u,v}\nu_b(u,v)=1$, which proves $\widehat{\rho}_{at}
> \widehat{\rho}_{bt}$.
\end{proof}

\subsection{Loss of Signal Detection}

Let $\rho_{mt}$ be the correlation maximum in the original image and
$\widehat{\rho}_{mt}$ be the corresponding maximum in the blur
image. Loss of signal detection may occur if $\rho_{mt}>\rho_{th}$
but $\widehat{\rho}_{mt}< \rho_{th}$ where $\rho_{th}$ is the
initial correlation threshold.

\begin{corollary}\label{pr:loss1}If
$\rho_{mt}>\rho_{th}$, after blur $\widehat{\rho}_{mt}
> \rho_{th}$ is guaranteed if
\begin{equation} \min_{-d \le (i,j) \le d}\{\rho_{mt}\{i,j\}\}\ge \rho_{th}.\end{equation}
\end{corollary}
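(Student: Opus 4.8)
The plan is to reduce the statement to the convex-averaging structure already established in the proof of Proposition \ref{pr:order}. The one ingredient I would reuse is that, after blurring, the correlation of the template with the peak location is exactly a weighted average of the pre-blur correlations over the blur support,
$$\widehat{\rho}_{mt}=\sum_{i,j}\nu_m(i,j)\,\rho_{mt}\{i,j\},$$
where the weights $\nu_m(i,j)$ are given by the construction in \eqref{eq:eta1}, are strictly positive, and satisfy $\sum_{i,j}\nu_m(i,j)=1$. This is the same derivation that produced the weight matrices in Proposition \ref{pr:order}, now specialized to the single location $m$, so I would invoke that representation directly rather than re-deriving it.

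Given this representation, the argument is essentially a single bounding step. First I would bound each summand from below by the minimum pre-blur correlation over the blur support and pull the constant minimum out of the sum using $\sum_{i,j}\nu_m(i,j)=1$, obtaining $\widehat{\rho}_{mt}\ge \min_{-d\le(i,j)\le d}\{\rho_{mt}\{i,j\}\}$. The hypothesis of the corollary then yields $\widehat{\rho}_{mt}\ge \rho_{th}$ at once. To sharpen this to the strict inequality $\widehat{\rho}_{mt}>\rho_{th}$ claimed in the statement, I would isolate the central term $(i,j)=(0,0)$, for which $\rho_{mt}\{0,0\}=\rho_{mt}>\rho_{th}$ by assumption while its weight $\nu_m(0,0)$ is strictly positive; bounding that term by $\rho_{mt}$ and every remaining term by $\rho_{th}$ shows the weighted average strictly exceeds $\rho_{th}$.

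The step I expect to require the most care is not the arithmetic but legitimizing the convex-combination representation itself, since it rests on the high-local-auto-correlation approximation $\widehat{\sigma}_m\approx\sum_{i,j}w(i,j)\sigma_m\{i,j\}$ used in Proposition \ref{pr:order} to force the weights to sum to one, together with positivity of every weight (which holds because the Gaussian mask $w$ and the block standard deviations $\sigma_m\{i,j\}$ are positive). Once that representation is accepted from the preceding proposition, the corollary follows immediately as a standard property of convex combinations, so I would keep the write-up short and defer all structural justification to the reference.
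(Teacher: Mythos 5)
Your proposal is correct and takes essentially the same route as the paper: the paper's entire proof is the single line ``Follows from Proposition \ref{pr:order},'' and what you have written is precisely the unpacking of that reference --- reusing the convex-combination representation $\widehat{\rho}_{mt}=\sum_{i,j}\nu_m(i,j)\,\rho_{mt}\{i,j\}$ with positive, sum-to-one weights established in that proposition's proof, then bounding the average below by the minimum. Your extra step isolating the $(0,0)$ term to obtain the strict inequality $\widehat{\rho}_{mt}>\rho_{th}$ is a detail the paper glosses over, and your caveat that the whole argument rests on the high-local-auto-correlation approximation $\widehat{\sigma}_m\approx\sum_{i,j}w(i,j)\sigma_m\{i,j\}$ is exactly the right thing to flag.
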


\begin{proof} Follows from Proposition \ref{pr:order}.
\end{proof}

\begin{figure} \centering
\includegraphics[width=6.5cm]{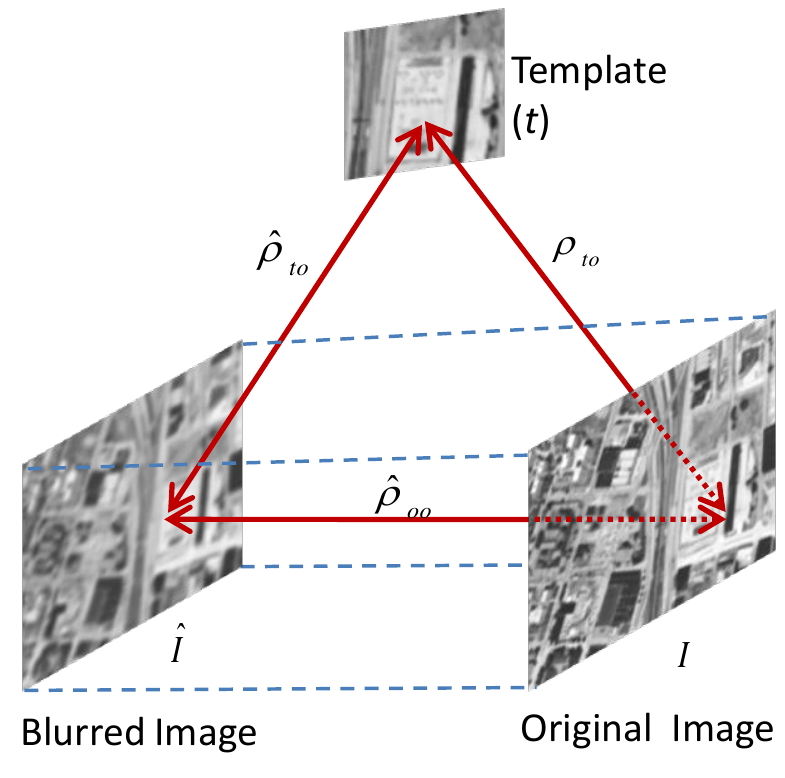}
\vspace{-3mm} \caption{Using transitive bounds for image quality
preservation (Prop. \ref{prop:lsd})} \label{fig:algo2} \vspace{-1mm}
\end{figure}

One may expect a reduction in correlation between $t$ and
$\widehat{r}_o$ due to blurring $\widehat{\rho}_{to}\le \rho_{to}$,
however the reduction can be constrained by using the lower
transitive bound.
\begin{proposition} \label{prop:lsd}If  $\rho_{to}>\rho_{th}$ then $\widehat{\rho}_{to} > \rho_{th}$ is guaranteed if
\begin{equation}\widehat{\rho}_{oo}\ge
\rho_{th}{\rho}_{to}\label{eq:misSig}+\sqrt{1+\rho^2_{th}\rho_{to}^2-(\rho^2_{th}+\rho^2_{to})},\end{equation}
where $\widehat{\rho}_{oo}$ is the correlation coefficient between the original image block 
$r_o$ and its blurred version $\widehat{r}_o$.
\end{proposition}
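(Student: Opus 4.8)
The plan is to recognize that the three blocks $\{t, r_o, \widehat{r}_o\}$ form an instance of the transitive triangle underlying the bounds \eqref{tranEqq}, with the original block $r_o$ acting as the common (central) block that links the template $t$ to its blurred counterpart $\widehat{r}_o$. Under this identification the bounded correlation is $\widehat{\rho}_{to}$, while the two bounding correlations are $\rho_{to}$ (template to original block, which is given to exceed $\rho_{th}$) and $\widehat{\rho}_{oo}$ (original block to its blur). So the proposition is really just the transitive lower bound, re-expressed as a condition on $\widehat{\rho}_{oo}$.

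First I would instantiate the lower-bound companion of \eqref{tranEqq} with these labels to get
$$\widehat{\rho}_{to} \ge \rho_{to}\widehat{\rho}_{oo} - \sqrt{(1-\rho_{to}^2)(1-\widehat{\rho}_{oo}^2)}.$$
Since $\widehat{\rho}_{to}$ is never smaller than this lower bound, a sufficient condition for $\widehat{\rho}_{to} > \rho_{th}$ is that the lower bound itself is at least $\rho_{th}$, namely
$$\rho_{to}\widehat{\rho}_{oo} - \rho_{th} \ge \sqrt{(1-\rho_{to}^2)(1-\widehat{\rho}_{oo}^2)}.$$
Squaring and collecting terms cancels the $\rho_{to}^2\widehat{\rho}_{oo}^2$ contributions and leaves the quadratic inequality in $\widehat{\rho}_{oo}$
$$\widehat{\rho}_{oo}^2 - 2\rho_{to}\rho_{th}\widehat{\rho}_{oo} + (\rho_{to}^2 + \rho_{th}^2 - 1) \ge 0,$$
whose upper root is precisely $\rho_{to}\rho_{th} + \sqrt{1+\rho_{to}^2\rho_{th}^2-(\rho_{to}^2+\rho_{th}^2)}$. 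Demanding that $\widehat{\rho}_{oo}$ exceed this upper root reproduces exactly the claimed condition \eqref{eq:misSig}. This is the same algebra already carried out in the proof of Proposition~\ref{lem:autocor1} under the relabeling $\rho_{co}\to\widehat{\rho}_{oo}$, $\rho_{tc}\to\rho_{to}$, $\rho_{tb}\to\rho_{th}$, so I would point to that computation rather than repeat it.

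The main obstacle is the usual subtlety of the squaring step: it is valid only when the left-hand side $\rho_{to}\widehat{\rho}_{oo}-\rho_{th}$ is non-negative, and the resulting quadratic has two roots, so I must justify selecting the upper branch. Here the structure of the problem resolves this cleanly: because $\widehat{r}_o$ is merely a blurred copy of $r_o$, the self-correlation $\widehat{\rho}_{oo}$ is close to $1$ and in particular positive, which places it on the upper branch and keeps the squared quantity positive. This is the identical positive-root selection used in Proposition~\ref{lem:autocor1}, so I would invoke that reasoning to discard the (irrelevant, negative) lower root and complete the argument.
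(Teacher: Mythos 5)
Your proposal is correct and follows essentially the same route as the paper's own proof: instantiate the transitive lower bound on the triangle $\{t, r_o, \widehat{r}_o\}$, demand that this lower bound exceed $\rho_{th}$, square to obtain the quadratic in $\widehat{\rho}_{oo}$, and keep the upper root. Your added remarks --- that the algebra is the relabeled computation of Proposition~\ref{lem:autocor1}, and that the squaring/branch-selection step needs justification (which the paper handles only by asserting that one root satisfies the unsquared inequality) --- are a modest improvement in care, not a different argument.
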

\begin{proof}
From Fig. \ref{fig:algo2} \begin{equation}\widehat{\rho}_{to} \ge
\widehat{\rho}_{oo}\rho_{to}-
\sqrt{1-\widehat{\rho}_{oo}^2}\sqrt{1-\rho_{to}^2},\end{equation}
 $\widehat{\rho}_{to}>\rho_{th}$ will be guaranteed if
\begin{equation}\label{bound1} \widehat{\rho}_{oo}\rho_{to}- \sqrt{1-\widehat{\rho}_{oo}^2}\sqrt{1-\rho_{to}^2}\ge
\rho_{th}.\end{equation} Squaring both sides and simplifying:
\begin{equation}\label{bonQ}\widehat{\rho}^2_{oo}-2\rho_{th} \widehat{\rho}_{oo}\rho_{to} + (\rho^2_{th}+
\rho^2_{to}-1)\ge 0,\end{equation}  which is quadratic in
$\widehat{\rho}_{oo}$ and only one root  satisfies \eqref{bound1}. For all values of $\widehat{\rho}_{oo} \geq$ this root, Proposition \ref{prop:lsd} will be satisfied.   
%$$ \widehat{\rho}^2_{oo}\ge
%\rho_{th}\rho_{to}
%+\sqrt{1+\rho^2_{th}\rho^2_{to}-(\rho^2_{th}+\rho^2_{to})}.$$
\end{proof}

Note that only the suppression of the maximum peak (signal) below the
threshold degrades accuracy. Therefore, putting
$\rho_{to}=\rho_{\max}$ in \eqref{eq:misSig}
\begin{equation}\label{misDec}\widehat{\rho}_{oo}\ge
\rho_{th}{\rho}_{\max}+\sqrt{1+\rho^2_{th}\rho_{\max}^2-(\rho^2_{th}+\rho^2_{\max})}\end{equation}
To avoid miss detection, \eqref{misDec} is ensured to hold for all
search locations.  We propose a very efficient method for the
computation of $\widehat{\rho}_{oo}$ given in algorithm
\ref{algo:blur}. Image locations where inequality \eqref{misDec} is
not satisfied blurring is not applied and the original image
contents are preserved.

\subsection{Loss of Localization}
Loss of localization is recovered by introducing a second matching
stage in which $t$ is matched with $I$ only in a small neighborhood
around the position of $\rho_{max}$ in $\widehat{I}$. We use the
size of this neighborhood the same as the size of the blur support
$(2d+1)\times(2d+1)$.
\begin{figure} \centering
\includegraphics[width=9.1cm]{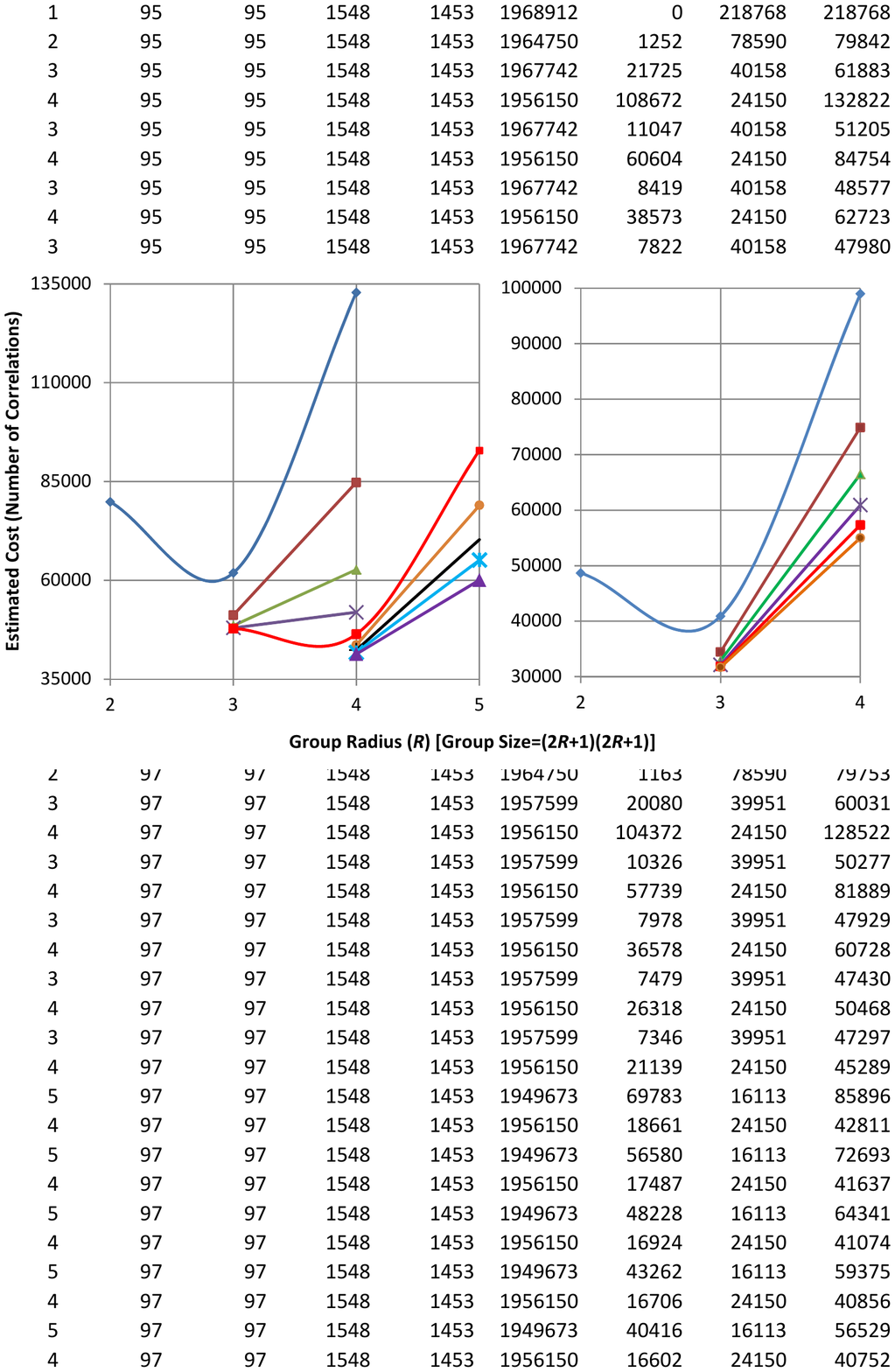}
\caption{ Each curve is an iteration of OptA algorithm and shows variation of total cost ($C_t^k$) with group size by using EGS algorithm. In the consecutive iterations of OptA algorithm, minimum cost given by EGS algorithm decreases. Two different scenarios are shown.  Fig. \ref{fig:VarCost}a: Group size
changed from 7$\times$7 to $9\times 9$ Fig. \ref{fig:VarCost}b: Group size remained 7$\times$7.
To reduce the number of iterations, algorithm is stopped when
$C_t^k-C_t^{k-1}<0.005 C_t^k$.} \label{fig:VarCost}
\end{figure}
\section{Optimizing Auto-correlation (OptA) Algorithm}
\label{sec:optauto}
Auto-correlation between two blurred image blocks increases due to
variance reduction  and also because of the
increased overlap in the blur support. Increased auto-correlation
makes the transitive bounds tight, resulting in more search
locations to be discarded. Algorithm \ref{algo:blur} shows how we
use non-uniform image blur for optimizing auto-correlation to
speedup template matching without accuracy degradation.

In the $k^{th}$ iteration the search image from the last iteration $\widehat{I}_{k-1}$  is again convoluted
with the blur mask $w$ \eqref{eq:Gauss}. Auto-correlation between the
current blurred image $\widehat{I}_{k}$ and the original image $I$ is computed in the matrix $\widehat{R}_{oo}$. All search
locations violating the bound given by \eqref{eq:misSig} are
unblurred by copying back full block contents from the original
image $I$ to the current blurred image $\widehat{I}_{k}$. Efficient group size and the cost are estimated by using
Algorithm \ref{algo:EGS}. If the current cost $c_t^k$ is less than the
previous cost $c_t^{k-1}$ by a relatively big margin $> .005C_t^k$, the algorithm
continues to the next iteration, otherwise we get the optimal blurred image $\widehat{I}_{e}$, 
local auto-correlation matrix $R_{co}^{e}$, and the efficient group size values $(h_{e},w_{e})$.
Fig. \ref{fig:VarCost} shows two plots of cost $c_t^k$ variations within EGS algorithm in consecutive iterations of OptA. Fig.  \ref{fig:VarCost}a shows 9 iterations of OptA algorithm. In the $k=6$ iteration, group size increased from $7\times 7$ to $9 \times 9$. In the later iterations, group size remains fixed, however cost reduced due to tightening of transitive bounds. Fig.  \ref{fig:VarCost}b shows 6 iterations of OptA algorithm. In each iteration, EGS cost reduced and in the final iteration, cost reduction was insignificant.

The blur correlation ($R_{oo}$) computation has a complexity of the
order of $O(pq)$. The complexity of convolution is $O((2d+1)^2pq)$,
where $d$ is the blur support. The dominant computational cost in
each iteration is for the efficient group size algorithm which is
$O(\omega pq)$ as discussed in Section \ref{sec:costEGS}. Therefore,
the overall complexity of Algorithm \ref{algo:blur} is
$O(\kappa\omega pq)$, where $\kappa$ is the number of iterations of
Algorithm \ref{algo:blur}.

\begin{algorithm}[t]
\caption{OptA: Optimizing Auto-correlation using EGS algorithm}
\begin{algorithmic}
\REQUIRE $I\in \mathcal{R}^{p\times q}$, $(m,n)$, $\rho_{th}$,
$\rho_{\max}, (h_o,w_o)$, $W$ \{filter\} \ENSURE $\widehat{I}_{e},
(h_{e},w_{e}), R_{co}^{e}, c_{t}^e$  \STATE
$[h_{1},w_{1},R_{co}^1,c_t^{1}]
\leftarrow\text{EGS}(I,h_o,w_o,m,n,\rho_{th})$\STATE$c_t^{0}
\leftarrow mnpq$ \STATE
$\lambda=\rho_{th}{\rho}_{\max}+\sqrt{1+\rho^2_{th}\rho_{\max}^2-(\rho^2_{th}+\rho^2_{\max})}$

 \STATE ${S}_k=\text{Run-Sum}{({I},m,n)}$\STATE${Q}=\text{Run-Sum}{({I}*{I},m,n)}$
 \STATE $\Omega=\sqrt{{Q}-\frac{1}{mn} {S}*{S}}$
 \STATE $k=1,\widehat{I}_{0}=I$   \COMMENT {$k$ is iteration counter}
\WHILE {$(c_t^{k}<c_t^{k-1})$}
    \STATE $\widehat{I}_k=\widehat{I}_{k-1}\otimes W $ \COMMENT {convolution}

\STATE $P=\widehat{I}_k*{I}$ \COMMENT {$*,\div$ are
point-wise operators}
    \STATE $S_{oo}= \text{Run-Sum}{(P,m,n)}$ 
 \STATE $\widehat{S}_k=\text{Run-Sum}{(\widehat{I}_k,m,n)}$ \STATE$\widehat{Q}_k=\text{Run-Sum}{(\widehat{I}_k*\widehat{I}_k,m,n)}$
    \STATE $\widehat{\Omega}_k=\sqrt{\widehat{Q}_k-\frac{1}{mn} \widehat{S}_k*\widehat{S}_k}$
    \STATE $\widehat{R}_{oo}={(S_{oo}- \frac{1}{mn}\widehat{S}_k *{S})}\div{(\widehat{\Omega}_k*{\Omega})}$
     \FORALL {$(x,y)$}
        \IF {$\widehat{R}_{oo}(x,y) < \lambda$}

           \STATE $\widehat{I}_k(x:x+m,y:y+n)\leftarrow{I}(x:x+m,y:y+n)$ %\{remove blur effect\}
           \ENDIF
           \ENDFOR

    \STATE $[h_{k+1},w_{k+1},R_{co}^{k+1},c_t^{k+1}]\leftarrow$ EGS($\widehat{I}_k,h_{k},w_{k},m,n,\rho_{th}$)
    \STATE $k \leftarrow k+1$ 
\ENDWHILE
\STATE $ (\widehat{I}_{e}, h_{e},w_{e}, R_{co}^{e}, c_{t}^e)\leftarrow(\widehat{I}_{k-1}, h_{k-1},w_{k-1}, R_{co}^{k-1}, c_{t}^{k-1})$
 \end{algorithmic}
\label{algo:blur}
\end{algorithm}

\section{Early Detection of the High Threshold}
\label{sec:HighITD} A high initial threshold at the start of the
search process may significantly increase the probability of success
of the sufficient elimination condition and  significantly reduce
the execution time. However, a very high threshold may result in skipping the
best match location while a very small value may result in increased
computational cost.  In the previous version of TEA~\cite{TIPTEA} the initial threshold was a
user defined parameter. We propose a simple strategy to automatically find a suitable threshold value. %In other algorithms such as ZBPC or EBC, the
%initial threshold is determined by a coarse-to-fine scheme which
%requires an additional computational cost. Also in case of small
%template sizes, such as 32$\times$32 pixels or less, the coarser
%template representation may lose uniqueness and the approach may
%yield a completely incorrect initialization.
In the previous TEA, the search
space was scanned only once. For each group, the template was
matched with the group center  and  the bounds were computed for the
remaining patches in the group. All patches for which the elimination
test failed were processed before moving to the next group. As a
result, the search space was required to be scanned only once. In
contrast, we propose two scans of the search space. During the first
scan, the template is only matched with the group centers and the
maximum correlation value is tracked. Once all group centers are exhausted,
the maximum value of the central correlation is used as initial
threshold in the second scan.

\begin{table}\centering
 \caption{Size of template images $m\times m$ 
pixels and the number of images $N$ for different datasets shown in Figures
\ref{fig:DataSI} and \ref{fig:DataAI}}\resizebox{9cm}{!}{\small
    \begin{tabular}{c p{5mm}p{5mm}p{5mm}p{5mm}p{5mm}p{5mm}p{5mm}p{5mm}p{5mm}p{5mm}p{5mm}}
    \toprule
    
          &\multicolumn{11}{c}{Satellite Image (SI) dataset}\\
    $m$&21    & 31    & 41    & 51    & 61    & 71    & 81    & 91    & 101   & 111   & 121 \\
    $N$&193   & 197   & 198   & 197   & 198   & 197   & 197   & 197   & 200   & 196   & 198 \\
 \midrule
     &\multicolumn{11}{c}{Aerial Image (AI) Dataset}\\
    
    $m$&29    & 39    & 49    & 59    & 69    & 79    & 89    & 99    & 109   & 119   & 129 \\
    $N$&153   & 175   & 184   & 190   & 198   & 195   & 196   & 199   & 198   & 198   & 200 \\
        \midrule
%        &\multicolumn{11}{c}{Circuit Boards (CB) Dataset}\\
%    $m$&25    & 35    & 45    & 55    & 65    & 75    & 85    & 95    & 105   & 115   & 125 \\
%    $N$&78    & 134   & 144   & 155   & 135   & 159   & 173   & 192   & 189   & 194   & 197 \\
    \bottomrule
    \end{tabular}}%
  \label{tab:addlabel}%
\end{table}%

\section{Experiments and Results}
In order to  test the proposed fast template matching algorithms, we  have performed extensive experimentation on real satellite and aerial image datasets. Performance of the proposed Efficient Group Size (EGS) algorithm  and the Optimal Auto-correlation (OptA) algorithm (with efficient group size) is separately reported. Both of these algorithms include the proposed strategy for early detection of the high initial threshold. EGS and OptA are compared with six current algorithms including previous TEA~\cite{TIPTEA}, PCE~\cite{TipPce}, FFT~\cite{NumRecipies}, ZEBC~\cite{ZEBC}, AMWU~\cite{AMWU},  SAD based on Successive Elimination  and Partial Distortion Elimination~\cite{PDE2005}. The execution time comparisons are performed on Intel Core i3 CPU \@ 2.10GHz and 3.00GB RAM.

Experiments are performed on  satellite image dataset (SI) of a  low population density seaport (Fig. \ref{fig:DataSI}) and aerial image dataset (AI) of a densely populated area (Fig. \ref{fig:DataAI}).  For each of the two datasets, templates of 11 different sizes are used (Table 1). Total number of templates in the SI dataset are 2168 and in the AI dataset are 2086. The templates are obtained from a different view point at a
different time. Therefore both datasets contain projective distortions as well as illumination variations. The dataset, C++ code and the experimental setup will soon be made publicly available at \url{www.csse.uwa.edu.au/~arifm/OptA.htm}.

\begin{figure} \centering
\includegraphics[width=9cm]{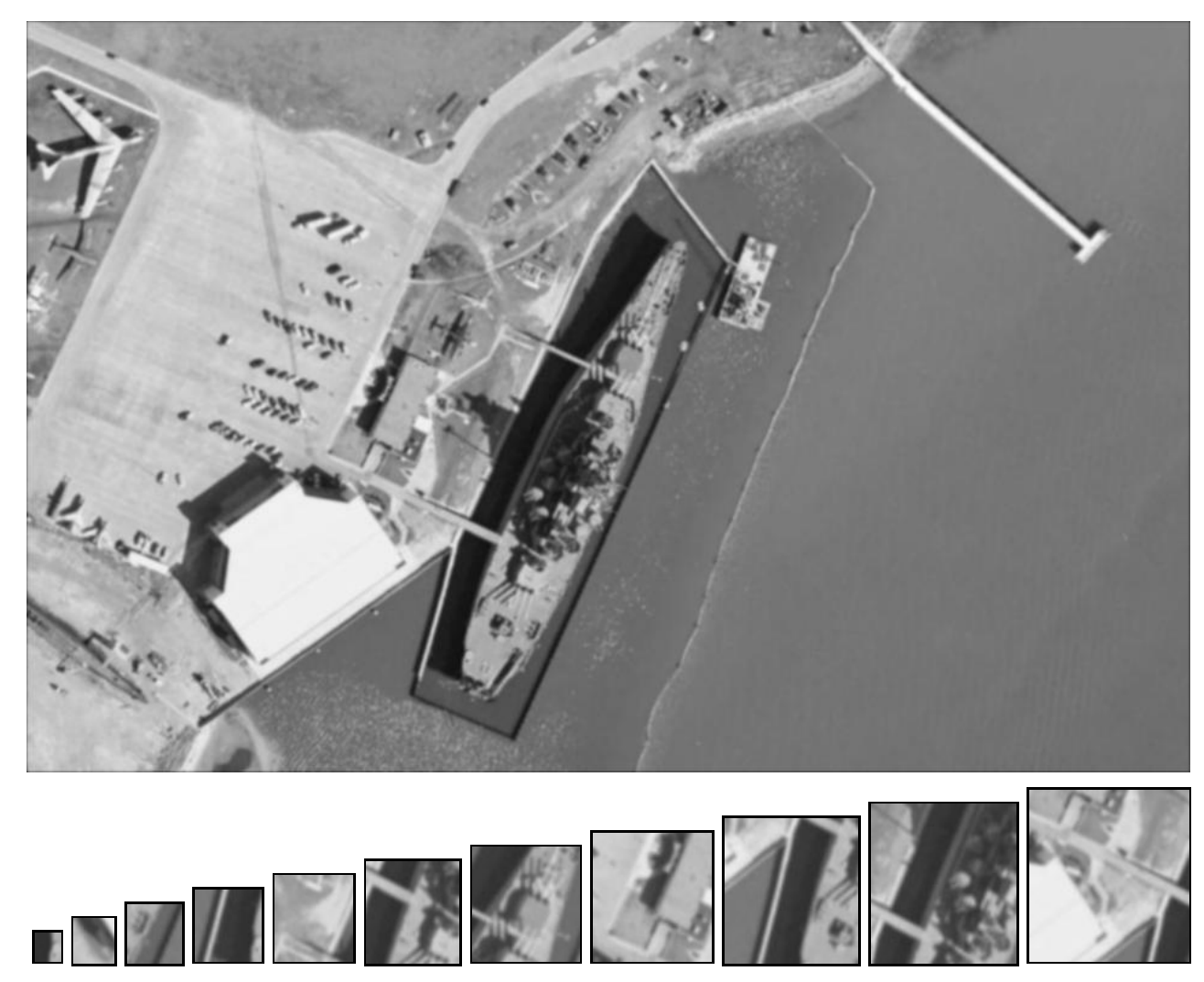}
\caption{Satellite Image (SI) dataset: search image is
962$\times$622 pixels and the template images have sizes varying from {21$\times$21} to {121$\times$121}. } \label{fig:DataSI}
\end{figure}

\begin{figure} \centering
\includegraphics[width=9cm]{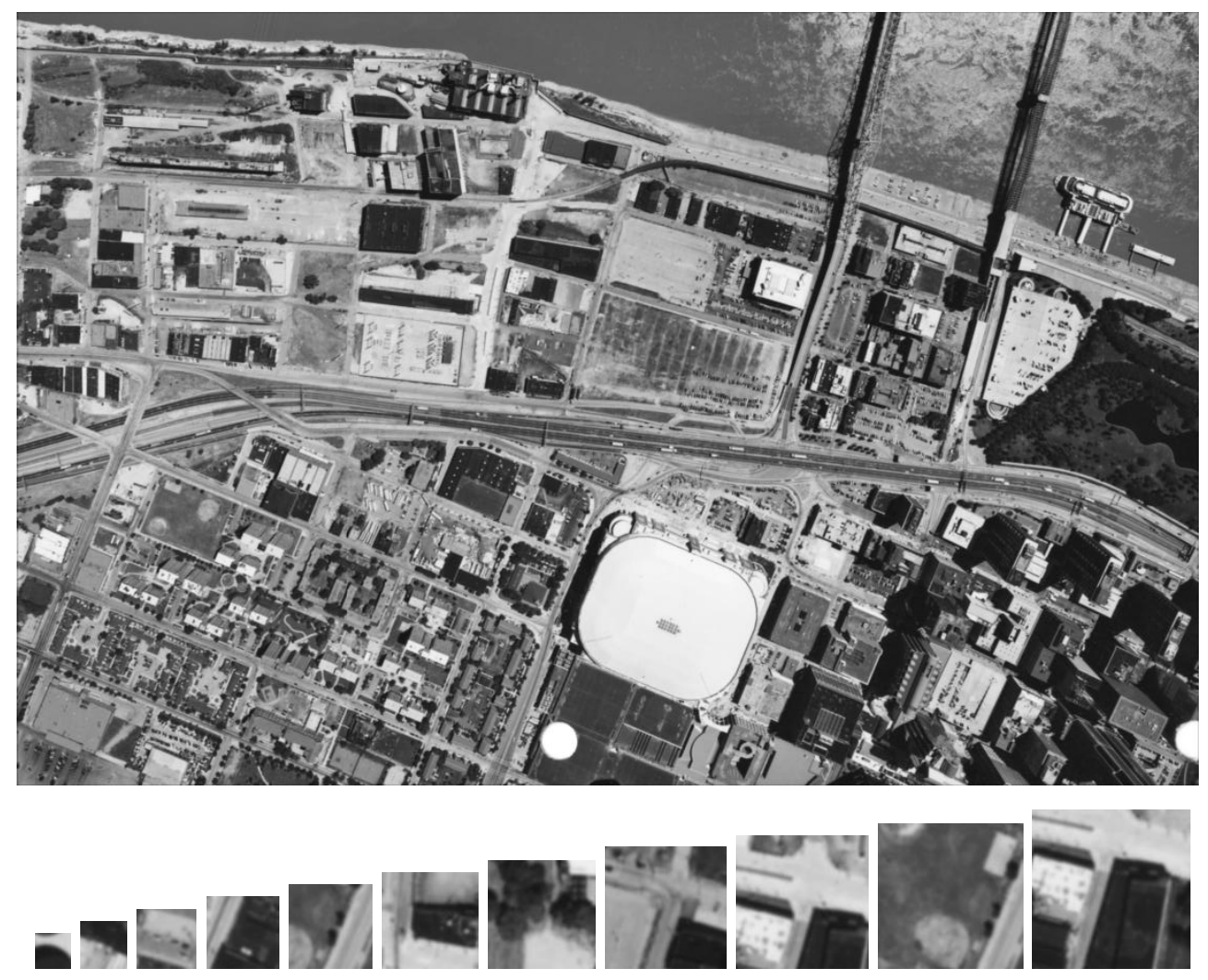}
\caption{ Aerial Image (AI) dataset: search image is
1549$\times$2389 pixels and the template images  vary from {29$\times$29} to
{129$\times$129} pixels.  } \label{fig:DataAI}
\end{figure}

%\begin{figure} \centering
%\includegraphics[width=8.5cm]{./figures/newDatasetCB}
%\caption{ Circuit Board (CB) dataset: reference is
%2241$\times$1630 and templates vary from {25$\times$25} to
%{125$\times$125}. } \label{fig:DataCB}
%\end{figure}

\begin{table}
  \centering
  \caption{Total execution time (sec)  including all related overheads on Satellite Image (SI) dataset. The proposed algorithms
    OptA and EGS are compared  with five existing techniques   }
\resizebox{9cm}{!}{
    \begin{tabular}{c|ccccccc}
    \toprule
          $m$  & OptA   &  EGS   &   TEA  &  PCE   &    ZEBC     &    FFT   &            SAD \\
  \midrule
    21    & 10    & 11    & 13    & 17    & 32    & 128       & 26 \\
    31    & 13    & 15    & 20    & 25    & 114   & 134      & 59 \\
    41    & 17    & 18    & 26    & 42    & 148   & 137      & 96 \\
    51    & 23    & 25    & 38    & 53    & 80    & 133      & 156 \\
    61    & 28    & 31    & 47    & 66    & 253   & 131      & 216 \\
    71    & 38    & 40    & 58    & 80    & 308   & 301      & 303 \\
    81    & 46    & 48    & 65    & 93    & 101   & 311     & 406 \\
    91    & 54    & 57    & 78    & 110   & 123   & 317     & 492 \\
    101   & 67    & 70    & 89    & 127   & 427   & 327     & 596 \\
    111   & 72    & 76    & 97    & 144   & 206   & 337     & 681 \\
    121   & 84    & 87    & 110   & 161   & 175   & 313     & 810 \\
    \bottomrule
    \end{tabular}}%
  \label{tab:SIspeed}%
\end{table}%

\begin{table}
  \centering
  \caption{Total execution time (sec) on Aerial Image (AI) Dataset.
    The proposed algorithms OptA and EGS are compared  with five existing techniques.   }
\resizebox{9cm}{!}{
    \begin{tabular}{c|ccccccc}
    \toprule
          $m$  & OptA   &  EGS   &   TEA  &  PCE   &    ZEBC     &    FFT   &            SAD \\
  \midrule
    29    & 55    & 69    & 76    & 121   & 456   & 1138     & 226 \\
    39    & 82    & 97    & 117   & 234   & 277   & 1288    & 472 \\
    49    & 123   & 139   & 175   & 336   & 277   & 1337    & 798 \\
    59    & 126   & 189   & 237   & 450   & 1203  & 1372    & 1212 \\
    69    & 146   & 238   & 300   & 554   & 250   & 1481    & 1839 \\
    79    & 172   & 300   & 356   & 651   & 1639  & 1442    & 2371 \\
    89    & 205   & 367   & 426   & 795   & 1869  & 1423    & 3108 \\
    99    & 245   & 451   & 500   & 983   & 833   & 1423   & 3960 \\
    109   & 214   & 526   & 580   & 1120  & 2356  & 1436   & 4794 \\
    119   & 225   & 380   & 661   & 1278  & 2575  & 1413   & 5727 \\
    129   & 253   & 422   & 757   & 1470  & 3074  & 1456   & 6822 \\
    \bottomrule
    \end{tabular}}%
  \label{tab:AIspeed}%
\end{table}%

%\begin{table}
%  \centering
%  \caption{Total execution time (sec)  on Circuit Boards (CB) Dataset. The proposed algorithms OptA and EGS are compared  with existing techniques.  }
%\resizebox{9cm}{!}{
%    \begin{tabular}{c|ccccccc}
%    \toprule
%          $m$  & OptA   &  EGS   &   TEA  &  PCE   &    ZEBC     &    FFT   &            SAD \\
%  \midrule
%    25    & 26    & 28    & 32    & 55    & 75    & 570      & 80 \\
%    35    & 58    & 65    & 77    & 173   & 151   & 991      & 296 \\
%    45    & 84    & 95    & 110   & 269   & 196   & 1100    & 574 \\
%    55    & 105   & 131   & 154   & 404   & 240   & 1155    & 924 \\
%    65    & 104   & 150   & 170   & 480   & 188   & 969     & 1152 \\
%    75    & 145   & 222   & 257   & 702   & 257   & 1200    & 2029 \\
%    85    & 189   & 309   & 335   & 948   & 318   & 1257    & 3004 \\
%    95    & 247   & 413   & 449   & 1352  & 394   & 1413    & 4227 \\
%    105   & 274   & 477   & 509   & 1559  & 353   & 1378   & 5117 \\
%    115   & 284   & 467   & 599   & 1845  & 449   & 1382   & 6123 \\
%    125   & 297   & 509   & 687   & 2116  & 504   & 1439   & 7801 \\
%    \bottomrule
%    \end{tabular}}%
%  \label{tab:CBspeed}%
%\end{table}%

%\subsection{Comparison with Exhaustive Accuracy Algorithms}
The OptA (Algorithm \ref{algo:blur}) is executed by varying  $W=$\{0.05, 0.20, 0.50,
0.20, 0.05\}, $\rho_{th}=0.80$ and $\rho_{max}=\{1.0, .95, .90\}$. The algorithm is not much sensitive to $\rho_{max}$
because the image regions with more initial blur start failing the
Proposition 4.3 and blurring is stopped in those regions. Therefore,
the three values of $\rho_{max}$ resulted in approximately the same
optimized auto-correlation. The results in Tables \ref{tab:SIspeed} and \ref{tab:AIspeed}   are
reported for $\rho_{max}=0.95$.

In the comparison of OptA with EGS, we observe that OptA has
outperformed EGS in all experiments with a maximum speedup of 2.46
times. No loss of accuracy is observed in the OptA algorithm.
Maximum localization error is observed up to $\pm (4,4)$ pixels.
Both EGS and OptA algorithms outperformed existing TEA~\cite{TIPTEA} (group
size $5\times 5$ and $\rho_th=0.80$) on all datasets (Tables \ref{tab:SIspeed} and \ref{tab:AIspeed}). OptA has obtained maximum 3 times speedup over TEA
and on the average, 2.7\% more elimination on AI dataset (Table
\ref{tab:Elim}).

OptA outperformed PCE~\cite{TipPce} (initial partitions=$m$,
$\rho_{th}=0.80$, two-stage initialization strategy) on all datasets
obtaining maximum speedup of 7.13 times and average computation
elimination up to 9.5\% on AI dataset. This experiment demonstrates significant improvement over current state of the art algorithm.

The OptA  outperformed
ZEBC~\cite{ZEBC} with relatively bigger margins. For ZEBC, the
partition parameter $(r)$ is kept close to 8 and $\rho_{th}=0.80$ as
recommended by~\cite{ZEBC}. Since the template rows must be
divisible by $r$, for each experiment the most efficient value of
$r$ is chosen. Maximum speedup of OptA over ZEBC is 12.13 times. On
SI dataset ZEBC obtained higher computation
elimination. However due to the larger cost of the elimination test,
ZEBC was not able to get the benefit of high computation elimination. We
have also compared OptA with FFT~\cite{NumRecipies} and we observe
up to 22.04 times speed up. Speed up over FFT is generally larger
for smaller template sizes and lesser for the larger sizes. It is
because the computational cost of FFT has relatively less variation
with template size increments.

For the three AI dataset $29\times 29$, $39\times 39$, and $49\times 49$ initial threshold is varied as \{0.70, 0.75, 0.80, 0.85, 0.90\} and the variation of total execution time over these  two datasets for the OptA algorithm is observed as \{294, 287,  287 ,  287,  286\} seconds respectively. The execution time is almost constant for the threshold $\ge$ 0.75. This experiment shows that the sensitivity of the proposed algorithm to the initial threshold is low. 

\begin{table} 
  \centering
  \caption{Average percent computation elimination on the SI and
           AI datasets}% \resizebox{9cm}{!}{
    \begin{tabular}{c|cccccc}
    \toprule
    Dataset  & OptA   &  EGS   &   TEA  &  PCE   &    ZEBC      &      SAD \\
  \midrule
   SI      &    94.5  & 94    & 91.9  & 87.5  & 95.7  & 66.4 \\
 %  CB      &   96.1  & 94.7  & 93.9  & 83.4  & 98.6  & 64.2 \\
   AI       &  96.7  & 95.2  & 94    & 87.2  & 95.2  & 68.3 \\
   \bottomrule
    \end{tabular}%
  \label{tab:Elim}%
\end{table}%

The proposed algorithm (OptA) is also compared with AMWU~\cite{AMWU}.  For this comparison, we prepared two new AI datasets with template sizes $32\times32$ and $64 \times 64$, because AMWU 
requires the template dimensions to be in the powers of 2.   For the AI reference of size 1549$\times$2389, AMWU is more than an order of magnitude slower than our proposed algorithm. AMWU is much efficient on the small search space  $ \le 512 \times 512$~\cite{AMWU}, however for the large search space, its performance degrades probably significantly long lists of the temporary winners  make the hashing scheme inefficient. 

Among the two datasets, OptA obtained more speed up on the AI dataset mainly because it contains  significant details and high frequency contents. OptA was able to induce blurring at most of the AI reference image locations without loss of image quality.  Less speedup on SI dataset is because a big portion of SI reference image is sea, having no details. OptA was not able to blur that portion because blurring caused image quality degradation.

In both datasets ground truth was manually marked and a match was considered correct if it was within $(\pm 4,\pm 4)$ pixels of the ground truth location. Using this criterion, OptA and EGS  exhibited 100\% accuracy on all datasets while SAD has an average accuracy of 81.0\% over AI and
76.11\% over SI  datasets. Maximum speedup of OptA over SAD with  SEA and
PDE is 26.93 times. Although more efficient implementations of SAD 
exist, however accuracy of SAD cannot be improved over the
exhaustive accuracy.

\section{Conclusion}

A fast template matching system was proposed to find a target in a large search space. Fast match speedup was obtained by inducing non uniform  blur in the search space. Exhaustive math accuracy was ensured by maintaining a minimum  image quality at all locations.
For this purpose an Optimal Auto-correlation (OptA) algorithm was proposed. OptA algorithm was based on Efficient Group Size (EGS) algorithm which made the transitive elimination algorithm more efficient. Moreover, a technique for early detection of the high initial threshold was also proposed.  The  fast template matching system was compared with six existing algorithms on two different datasets. The proposed system consistently outperformed the existing techniques.

\section{Acknowledgements}
This research was supported by Australian Research Council grants DP1096801 and DP110102399.
%We thank the authors of \cite{EBC} for providing C++ code of the
%ZEBC algorithm.
%\appendix
%\appendices
%\section{}
%Considering $t$, $r_o$ and $r_c$ as vectors, let $\theta_{to}$,
%$\theta_{co}$ and $\theta_{tc}$ be the angles between them. By the
%triangular inequality:
%$$\theta_{tc}+\theta_{co}\ge \theta_{to}\ge
%|\theta_{tc}-\theta_{co}|.$$ Since $\rho_{co}=\cos \theta_{co}$,
%$\rho_{tc}=\cos \theta_{tc}$, and $\rho_{to}=\cos \theta_{to}$
%therefore $$\cos(\theta_{tc}+\theta_{co})\ge \cos(\theta_{to})\ge
%\cos(\theta_{tc}-\theta_{co}).$$ Simplifying, we get Theorem
%\ref{theo:tb}. For further details, see \cite{TIPTEA}.

%\section{ Result \ref{res:Var}: Effect of Blur on Variance}
%\label{app:var} %Proof of Result \ref{res:Var}:
%\small{
%$\widehat{\sigma}^2_o=\sum_{x,y}{\widehat{r}^2_o}-{mn}(\widehat{\mu}_o)^2$
%$$=\sum_{x,y}{(\sum_{i,j}{w(i,j){r}_o(x+i,y+j)})^2}-mn(\sum_{i,j}{w(i,j)\mu_o \{i,j\}})^2$$
%$$=\sum_{x,y}{(\sum_{i,j}{w(i,j){r}_o(x+i,y+j)}\sum_{u,v}{w(u,v){r}_o(x+u,y+v)})}$$
%$$-mn\sum_{i,j}{w(i,j)\mu_o \{i,j\}}\sum_{u,v}{w(u,v)\mu_o\{u,v\}}$$
%%$$=\sum_{i,j}\sum_{u,v}{w(i,j)w(u,v)\sum_{x,y}{r_o(x+i,y+j)r_o(x+u,y+v)}}$$
%%$$-\sum_{i,j}\sum_{u,v}{w(i,j)w(u,v)\mu_o \{i,j\}\mu_o\{u,v\}}$$
%$$=\sum_{i,j}\sum_{u,v}{w(i,j)w(u,v)(\sum_{x,y}{r_o(x+i,y+j)r_o(x+u,y+v)}}$$
%$${-mn\mu_o \{i,j\}\mu_o\{u,v\})}
%=\sum_{i,j}\sum_{u,v}{w(i,j)w(u,v)\varphi_o
%\{i,j,u,v\}}.$$}

%\section*{References}
%\section{Effect of Blur on auto-correlation: Result \ref{res:Autocorr}}
%\label{app:autocor}
%Proof of Result \ref{res:Autocorr}:
 \balance
 \bibliographystyle{IEEEtran}
%\bibliography{IEEEabrv,References}

\end{document}